\documentclass{article}
\usepackage[utf8]{inputenc}
\usepackage{amsmath,amssymb,amsthm}
\usepackage{mathtools}
\usepackage{booktabs}
\usepackage{hyperref}
\usepackage{cleveref}
\usepackage[margin=1in]{geometry}
\usepackage{natbib}
\usepackage{algorithm}
\usepackage{algorithmic}
\usepackage{graphicx}

\newtheorem{theorem}{Theorem}
\newtheorem{proposition}{Proposition}
\newtheorem{corollary}{Corollary}

\newtheorem{remark}{Remark}
\newtheorem{definition}{Definition}

\newcommand{\R}{\mathbb{R}}
\newcommand{\SO}{\mathrm{SO}}
\newcommand{\tr}{\mathrm{tr}}
\newcommand{\diag}{\mathrm{diag}}

\newcommand{\Id}{\mathbf{I}}
\newcommand{\norm}[1]{\left\lVert #1 \right\rVert}
\newcommand{\frob}[1]{\left\lVert #1 \right\rVert_F}
\newcommand{\abs}[1]{\left\lvert #1 \right\rvert}
\newcommand{\pderiv}[2]{\frac{\partial #1}{\partial #2}}

\title{Training Without Orthogonalization, Inference With SVD:\\ A Gradient Analysis of Rotation Representations}

\author{
  Chris Choy\\
  NVIDIA\\
  \texttt{cchoy@nvidia.com}
}

\date{}

\begin{document}
\maketitle

\begin{abstract}
Recent work has shown that removing orthogonalization during training and applying it only at inference improves rotation estimation in deep learning, with empirical evidence favoring 9D representations with SVD projection~\citep{gu2024prom}.
However, the theoretical understanding of \emph{why} SVD orthogonalization specifically harms training, and why it should be preferred over Gram-Schmidt at inference, remains incomplete.
We provide a detailed gradient analysis of SVD orthogonalization specialized to $3 \times 3$ matrices and $\SO(3)$ projection.
Our central result derives the exact spectrum of the SVD backward pass Jacobian: it has rank $3$ (matching the dimension of $\SO(3)$) with nonzero singular values $2/(s_i + s_j)$ and condition number $\kappa = (s_1 + s_2)/(s_2 + s_3)$, creating quantifiable gradient distortion that is most severe when the predicted matrix is far from $\SO(3)$ (e.g., early in training when $s_3 \approx 0$).
We further show that even stabilized SVD gradients~\citep{wang2021robust} introduce gradient direction error, whereas removing SVD from the training loop avoids this tradeoff entirely.
We also prove that the 6D Gram-Schmidt Jacobian has an asymmetric spectrum: its parameters receive unequal gradient signal, explaining why 9D parameterization is preferable.
Together, these results provide the theoretical foundation for training with direct 9D regression and applying SVD projection only at inference.
\end{abstract}

\section{Introduction}
\label{sec:intro}

Representing 3D rotations for deep learning is a fundamental problem in computer vision and robotics. A neural network must output a rotation $R \in \SO(3)$, but the rotation group is a 3-dimensional manifold embedded in $\R^{3 \times 3}$ with the orthogonality constraint $R^\top R = \Id, \det(R) = 1$.
This has led to diverse representations: axis-angle, quaternions, 6D~\citep{zhou2019continuity}, and 9D with SVD projection~\citep{levinson2020analysis}.

Two recent lines of work have reached different conclusions about when orthogonalization should be applied.
\citet{levinson2020analysis} showed that SVD orthogonalization is the maximum likelihood estimator under isotropic Gaussian noise and, to first order, produces half the expected reconstruction error of Gram-Schmidt, advocating for SVD as a differentiable layer during both training and inference.  We call this \textbf{SVD-Train} (following the terminology of \citet{levinson2020analysis}).  We use \textbf{SVD-Inference} to denote the alternative where SVD is applied only at inference, with the training loss computed on the raw (unorthogonalized) matrix.
\citet{gu2024prom} showed that \emph{any} orthogonalization during training introduces gradient pathologies (ambiguous updates, exploding gradients, suboptimal convergence) and proposed learning ``pseudo rotation matrices'' (PRoM), which is an instance of SVD-Inference: direct 9D regression during training with SVD projection applied only at test time.

However, key questions remain unanswered. The PRoM analysis treats all orthogonalizations uniformly through a general non-injectivity argument, without dissecting the \emph{specific} gradient failure modes of SVD versus Gram-Schmidt. While their ablation study shows that SVD at inference outperforms Gram-Schmidt (54.8 vs.\ 55.6 PA-MPJPE), no theoretical justification is provided for this choice. Moreover, no prior work has formally analyzed why 9D regression is preferable to 6D regression in the unorthogonalized training regime.

We address these questions through a detailed Jacobian analysis of the SVD and Gram-Schmidt orthogonalization mappings.  Our primary contribution is a \textbf{detailed analysis of SVD gradient pathology} specialized to $3 \times 3$ matrices and $\SO(3)$ projection:
\begin{itemize}
    \item We derive the exact spectrum of the SVD Jacobian: it has rank $3$ with nonzero singular values $2/(s_i + s_j)$ for each pair $i < j$, giving spectral norm $2/(s_2+s_3) = O(1/\delta)$ and condition number $\kappa = (s_1 + s_2)/(s_2 + s_3)$ (\Cref{thm:svd_gradient_bound}).  This significantly extends PRoM's qualitative observation about the $K$ matrix by providing the exact spectral characterization.
    \item We show that even state-of-the-art stabilization~\citep{wang2021robust} cannot eliminate this pathology without introducing gradient direction error, making avoidance strictly preferable to mitigation (\Cref{prop:taylor_comparison}).
    \item We quantify the \emph{gradient information loss}: SVD backpropagation retains only $1/3$ of gradient energy, discarding the 6-dimensional normal component that encodes distance from $\SO(3)$ (\Cref{prop:info_loss}).
\end{itemize}

We also prove that Gram-Schmidt's Jacobian has an asymmetric spectrum (\Cref{thm:gs_asymmetry}), explaining why 9D is preferable to 6D.  Combined with SVD's optimality as an inference-time projector ($3\times$ error reduction, \Cref{cor:factor_three}), these results explain why 9D + SVD-inference works.

\begin{figure*}[t]
    \centering
    \includegraphics[width=\textwidth]{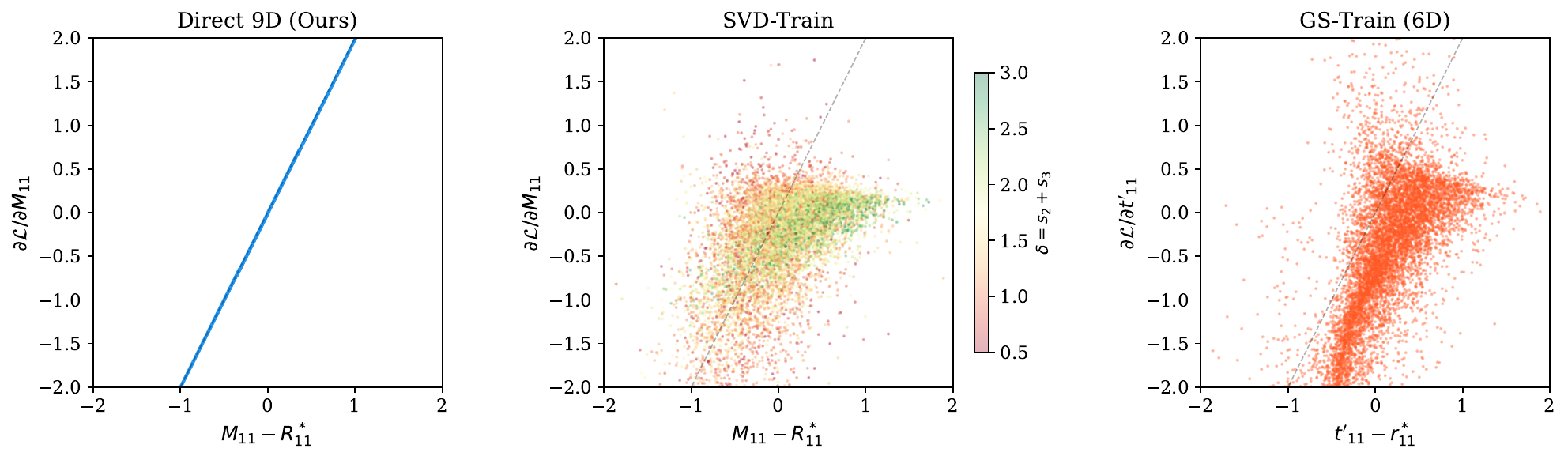}
    \caption{Gradient of $\mathcal{L}$ w.r.t.\ $M_{11}$ (or $t'_{11}$) vs.\ the error $M_{11} - R^*_{11}$, with $\sigma = 0.5$ Gaussian noise ($10{,}000$ samples). \textbf{Left:} Direct 9D gradients lie on the diagonal (each element depends only on its own error). \textbf{Center:} SVD-Train gradients scatter across all quadrants; color encodes the singular value gap $\delta = s_2 + s_3$ (small $\delta$ in red = most erratic). \textbf{Right:} GS-Train (6D) also produces ambiguous gradients from cross-column coupling.}
    \label{fig:gradient_scatter}
\end{figure*}

\section{Related Work}
\label{sec:related}

\paragraph{Rotation representations.}
Classical parameterizations (Euler angles, axis-angle, quaternions) are all discontinuous mappings from $\SO(3)$ to $\R^d$ for $d \leq 4$, a topological necessity proved by \citet{stuelpnagel1964parameterization}.
\citet{zhou2019continuity} proposed the first continuous representation for neural networks: a 6D parameterization using two columns of the rotation matrix, recovered via Gram-Schmidt orthogonalization.
\citet{levinson2020analysis} subsequently advocated for a 9D representation with SVD orthogonalization, showing it is the maximum likelihood estimator under Gaussian noise and produces half the expected error of Gram-Schmidt.
Alternative approaches include spherical regression on $n$-spheres~\citep{liao2019spherical}, smooth quaternion representations~\citep{peretroukhin2020smooth}, mixed classification-regression frameworks~\citep{mahendran2018mixed}, probabilistic models using von Mises~\citep{prokudin2018deep} or matrix Fisher distributions~\citep{gilitschenski2020deep}, and direct manifold regression~\citep{bregier2021deep}. Both \citet{gilitschenski2020deep} and \citet{bregier2021deep} use unconstrained parameterizations, avoiding orthogonalization constraints during training.
\citet{geist2024learning} provide a comprehensive survey of rotation representations, empirically observing that SVD gradient ratios between columns stay closer to 1 than GS (their Fig.~7), and framing SVD as an ``ensemble'' where all columns contribute equally. Our work provides the rigorous mathematical foundations for these empirical observations.
We note that \citet{geist2024learning} report that directly predicting rotation matrix entries is worse than using SVD/GS layers; however, their comparison uses orthogonalization-based losses during training, not the MSE-on-raw-matrix approach of PRoM~\citep{gu2024prom}, which is the key distinction.

\paragraph{Orthogonalization during training.}
\citet{gu2024prom} showed that incorporating any orthogonalization (Gram-Schmidt or SVD) during training introduces gradient pathologies: ambiguous updates, exploding gradients, and provably suboptimal convergence.
They proposed Pseudo Rotation Matrices (PRoM), which remove orthogonalization during training and apply it only at inference, achieving state-of-the-art results on human pose estimation benchmarks.
PRoM's theoretical contributions are two-fold: (i) a convergence bound showing that the loss gap $L(\hat{R}_i) - L(R^*)$ is controlled by $\psi(B_i) = 1/\sqrt{\lambda_{\min}(B_i B_i^\top)}$, and (ii) a proof that $\psi(B_1) = \infty$ for any non-locally-injective orthogonalization $g$, while $\psi(B_0) = 1$ when $g$ is removed.
These results are general: they treat Gram-Schmidt and SVD identically as instances of non-injective $g$, without deriving explicit bounds on SVD gradient magnitude, conditioning, or directional distortion.
PRoM predicts all 9 matrix elements during training (not 6), and their ablation shows SVD at inference outperforms Gram-Schmidt (54.8 vs.\ 55.6 PA-MPJPE). But no theoretical justification is given for why 9D outperforms 6D, or why SVD beats Gram-Schmidt at inference.

We build on PRoM's insight by providing the theory: SVD-specific gradient analysis with explicit Jacobian bounds, a characterization of Gram-Schmidt's asymmetric gradient structure, and a justification for preferring SVD over Gram-Schmidt at inference.

\paragraph{SVD gradient stability.}
Differentiating through SVD is known to be numerically challenging~\citep{ionescu2015matrix,giles2008matrix,townsend2016differentiating}.
The backward pass involves a kernel matrix with entries $1/(s_i^2 - s_j^2)$ that diverge when singular values coincide~\citep{ionescu2015matrix}.
\citet{wang2021robust} proposed a Taylor expansion approximation to stabilize SVD gradients, bounding the gradient scaling factor by $n(K+1)/\varepsilon$ (where $K$ is the expansion degree and $\varepsilon$ is a clamping threshold), but at the cost of gradient direction error (up to $5.71°$ in the worst case when the dominant eigenvalue covers at least $50\%$ of the energy).
Our theoretical analysis shows that this tradeoff is unnecessary: removing SVD from the training loop, as \citet{gu2024prom} proposed, yields exact gradients with no direction error and no hyperparameters.

\section{Preliminaries}
\label{sec:prelim}

\subsection{Rotation Representations for Deep Learning}

A neural network $f_\mathbf{w}: \mathcal{X} \to \R^d$ maps input $\mathbf{x}$ to a $d$-dimensional output, which is then mapped to $\SO(3)$ via a representation function $r: \R^d \to \R^{3 \times 3}$ and an orthogonalization function $g: \R^{3 \times 3} \to \SO(3)$. The predicted rotation is $\hat{R} = g(r(f_\mathbf{w}(\mathbf{x})))$.
We briefly review the main rotation representations used in the literature; for a comprehensive treatment see \citet{geist2024learning}.

\paragraph{Euler angles (3D).}
The oldest parameterization decomposes a rotation into three successive rotations about coordinate axes: $R(\alpha,\beta,\gamma) = R_3(\gamma)R_2(\beta)R_1(\alpha)$~\citep{euler1765mouvement}. Euler angles suffer from gimbal lock (singularities at $\beta = \pm\pi/2$), non-unique representations, and a discontinuous inverse map $g$, making them unsuitable for gradient-based learning~\citep{zhou2019continuity,geist2024learning}.

\paragraph{Exponential coordinates / rotation vectors (3D).}
A rotation is encoded as $\boldsymbol{\omega} \in \R^3$, where the direction gives the rotation axis and the norm $\|\boldsymbol{\omega}\|$ gives the angle. The rotation matrix is recovered via the matrix exponential of the skew-symmetric form $[\boldsymbol{\omega}]_\times$, or equivalently Rodrigues' formula~\citep{grassia1998practical}. Because $\boldsymbol{\omega}$ and $(\|\boldsymbol{\omega}\| - 2\pi)\boldsymbol{\omega}/\|\boldsymbol{\omega}\|$ encode the same rotation (double cover), the inverse map $g$ is discontinuous~\citep{stuelpnagel1964parameterization}.

\paragraph{Quaternions (4D).}
Unit quaternions $q = (w,x,y,z) \in \mathcal{S}^3$ provide a smooth, singularity-free parameterization related to axis-angle by $w = \cos(\alpha/2)$ and $(x,y,z) = \sin(\alpha/2)\tilde{\boldsymbol{\omega}}$~\citep{grassia1998practical}. However, unit quaternions double-cover $\SO(3)$ ($q$ and $-q$ represent the same rotation), so any continuous inverse map $g: \SO(3) \to \mathcal{S}^3$ is impossible~\citep{stuelpnagel1964parameterization}. Augmented quaternion losses and smooth parameterizations have been proposed to mitigate this~\citep{peretroukhin2020smooth}.

\paragraph{6D representation with Gram-Schmidt.}
The network outputs $\mathbf{t}_1', \mathbf{t}_2' \in \R^3$ (6 parameters). The orthogonalization $g_{\mathrm{GS}}$ produces:
\begin{equation}\label{eq:gs}
    \mathbf{r}_1' = \frac{\mathbf{t}_1'}{\norm{\mathbf{t}_1'}}, \quad
    \mathbf{r}_2' = \frac{\mathbf{t}_2' - (\mathbf{r}_1' \cdot \mathbf{t}_2')\mathbf{r}_1'}{\norm{\mathbf{t}_2' - (\mathbf{r}_1' \cdot \mathbf{t}_2')\mathbf{r}_1'}}, \quad
    \mathbf{r}_3' = \mathbf{r}_1' \times \mathbf{r}_2'.
\end{equation}

\paragraph{9D representation with SVD.}
The network outputs all 9 elements of a matrix $M \in \R^{3 \times 3}$. The special orthogonalization~\citep{levinson2020analysis} is:
\begin{equation}\label{eq:svd}
    g_{\mathrm{SVD}}(M) = \mathrm{SVDO}^+(M) := U \Sigma' V^\top, \quad \Sigma' = \diag(1, 1, \det(UV^\top)),
\end{equation}
where $M = U\Sigma V^\top$ is the SVD of $M$. This is the closest rotation matrix to $M$ in Frobenius norm~\citep{arunprocrustes}:
\begin{equation}\label{eq:svd_optimality}
    \mathrm{SVDO}^+(M) = \arg\min_{R \in \SO(3)} \frob{R - M}^2.
\end{equation}

\subsection{The SVD Backward Pass}

For a loss $\mathcal{L}(M, R) = \frob{\mathrm{SVDO}^+(M) - R}^2$, the gradient $\pderiv{\mathcal{L}}{M}$ requires differentiating through the SVD. Building on the general SVD Jacobian framework of \citet{papadopoulo2000estimating}, we specialize the derivation to the rotation projection $M \mapsto R = UV^\top$ for $3 \times 3$ matrices and derive the complete spectrum of the resulting Jacobian.

Let $M = U \Sigma V^\top$ with singular values $s_1 \geq s_2 \geq s_3 \geq 0$, and let $R = UV^\top$.  Consider a perturbation $\mathrm{d}M$.  The orthogonality constraints $U^\top U = \Id$ and $V^\top V = \Id$ imply that $U^\top \mathrm{d}U$ and $V^\top \mathrm{d}V$ are antisymmetric.  Differentiating $M = U \Sigma V^\top$ and left-multiplying by $U^\top$, right-multiplying by $V$, we obtain:
\begin{equation}\label{eq:svd_diff}
    U^\top \mathrm{d}M \, V = (U^\top \mathrm{d}U)\Sigma + \mathrm{d}\Sigma + \Sigma(V^\top \mathrm{d}V)^\top.
\end{equation}
The diagonal of \eqref{eq:svd_diff} gives $\mathrm{d}\Sigma_{ii} = (U^\top \mathrm{d}M \, V)_{ii}$.  Letting $P = U^\top \mathrm{d}M\, V$, $A = U^\top \mathrm{d}U$ (antisymmetric), and $\Omega = V^\top \mathrm{d}V$ (antisymmetric), the off-diagonal entries ($i \neq j$) of \eqref{eq:svd_diff} yield $P_{ij} = s_j A_{ij} - s_i \Omega_{ij}$ and $P_{ji} = -s_i A_{ij} + s_j \Omega_{ij}$, i.e., a $2 \times 2$ linear system:
\begin{equation}\label{eq:svd_system}
    \begin{pmatrix} s_j & -s_i \\ -s_i & s_j \end{pmatrix}
    \begin{pmatrix} A_{ij} \\ \Omega_{ij} \end{pmatrix}
    =
    \begin{pmatrix} P_{ij} \\ P_{ji} \end{pmatrix}.
\end{equation}
The determinant of this system is $s_j^2 - s_i^2$.  When $s_i \neq s_j$, solving gives~\citep{papadopoulo2000estimating,giles2008matrix}:
\begin{equation}\label{eq:omega_full}
    A_{ij} = \frac{s_j P_{ij} - s_i P_{ji}}{s_j^2 - s_i^2}, \quad
    \Omega_{ij} = \frac{s_j P_{ji} - s_i P_{ij}}{s_j^2 - s_i^2}.
\end{equation}
The differential of the rotation $R = UV^\top$ is $\mathrm{d}R = U \Phi V^\top$ where $\Phi = A - \Omega$ is antisymmetric.  Substituting \eqref{eq:omega_full}:
\begin{equation}\label{eq:phi_full}
    \Phi_{ij} = A_{ij} - \Omega_{ij} = \frac{(s_j + s_i)(P_{ij} - P_{ji})}{s_j^2 - s_i^2} = \frac{P_{ij} - P_{ji}}{s_j - s_i} = \frac{P_{ji} - P_{ij}}{s_i + s_j}, \quad i \neq j, \qquad \Phi_{ii} = 0,
\end{equation}
where the last step uses $s_j^2 - s_i^2 = (s_j - s_i)(s_j + s_i)$ and $s_j - s_i = -(s_i - s_j)$.
Each off-diagonal entry of $\Phi$ is a linear function of $\mathrm{d}M$, divided by $s_i + s_j$.  The resulting gradient for a loss $\mathcal{L}$ through $R = UV^\top$ can be written as~\citep{levinson2020analysis}:
\begin{equation}\label{eq:svd_grad}
    \pderiv{\mathcal{L}}{M} = U Z V^\top, \quad
    Z_{ij} = \begin{cases}
        \dfrac{-X_{ij}}{s_i + s_j}, & i \neq j, \\[6pt]
        0, & i = j,
    \end{cases}
\end{equation}
where $X = U^\top \pderiv{\mathcal{L}}{U} - \pderiv{\mathcal{L}}{U}^\top U$ is antisymmetric. For $\mathrm{SVDO}^+$ when $\det(M) < 0$, the factor $\Sigma' = \diag(1,1,-1)$ effectively replaces $s_3$ with $-s_3$, changing the denominator for pairs involving the third singular value to $s_i - s_3$ (for $i \in \{1,2\}$).

The derivation above makes explicit that the source of all SVD gradient pathologies is the $2 \times 2$ system \eqref{eq:svd_system}: its determinant $s_i^2 - s_j^2$ controls the sensitivity of singular vectors to input perturbations, and its reciprocal $1/(s_i + s_j)$ directly scales every gradient component in \eqref{eq:phi_full}.

\section{SVD Gradient Pathology: The Convergence Paradox}
\label{sec:svd_analysis}

The $1/(s_i + s_j)$ scaling in \eqref{eq:phi_full} creates three pathologies for training: gradient explosion, poor conditioning, and gradient coupling.

\subsection{Gradient Explosion and Conditioning}

Consider the mapping $g_{\mathrm{SVD}}: M \mapsto R = UV^\top$ where $M = U \Sigma V^\top$. We analyze the Jacobian $J_{\mathrm{SVD}} = \pderiv{\mathrm{vec}(R)}{\mathrm{vec}(M)} \in \R^{9 \times 9}$.

\begin{definition}[Singular value gap]
For a matrix $M$ with singular values $s_1 \geq s_2 \geq s_3 \geq 0$, define the minimum singular value gap as
\begin{equation}
    \delta(M) := \min_{i \neq j} |s_i + s_j|.
\end{equation}
For $\mathrm{SVDO}^+$ with $\det(M) > 0$, this equals $\min_{i \neq j}(s_i + s_j) = s_2 + s_3$. For $\det(M) < 0$ (where $s_3$ becomes $-s_3$), this equals $\min(s_1 - s_3, s_2 - s_3, s_1 + s_2)$.
\end{definition}

\begin{theorem}[SVD Jacobian spectrum]\label{thm:svd_gradient_bound}
Let $M \in \R^{3 \times 3}$ with $\det(M) > 0$ and SVD $M = U \Sigma V^\top$ with distinct singular values $s_1 > s_2 > s_3 > 0$. The Jacobian $J_{\mathrm{SVD}} = \pderiv{\mathrm{vec}(R)}{\mathrm{vec}(M)}$ of the mapping $M \mapsto R = UV^\top$ has rank $3$ (matching the dimension of $\SO(3)$) with a $6$-dimensional null space.  Its three nonzero singular values are:
\begin{equation}\label{eq:svd_jacobian_spectrum}
    \sigma(J_{\mathrm{SVD}}) = \left\{\frac{2}{s_1+s_2},\; \frac{2}{s_1+s_3},\; \frac{2}{s_2+s_3},\; 0,\; 0,\; 0,\; 0,\; 0,\; 0\right\}.
\end{equation}
Consequently, the spectral norm and condition number (of the restriction to the column space) are:
\begin{equation}\label{eq:svd_jacobian_bound}
    \norm{J_{\mathrm{SVD}}}_2 = \frac{2}{s_2 + s_3}, \qquad
    \kappa(J_{\mathrm{SVD}}) = \frac{s_1 + s_2}{s_2 + s_3}.
\end{equation}
When $s_3 \to 0$ (common early in training), $\norm{J_{\mathrm{SVD}}}_2 = O(1/s_3)$.
\end{theorem}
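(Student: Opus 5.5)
The plan is to reduce $J_{\mathrm{SVD}}$, by orthogonal changes of coordinates, to the explicit linear map $P \mapsto \Phi$ of \eqref{eq:phi_full}, and then read off its spectrum on the symmetric/antisymmetric splitting of $\R^{3\times 3}$.

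\textbf{Setting and validity.} Since $\det(M)>0$ and $s_1>s_2>s_3>0$, we have $\det(UV^\top)=1$, so $\Sigma'=\Id$ and $R=UV^\top$. Distinct singular values make $U,V$ locally smooth up to simultaneous column sign flips, and $UV^\top$ is invariant under those flips. Hence the map $M\mapsto R$ is smooth near $M$, and the differential computation \eqref{eq:svd_diff}--\eqref{eq:phi_full} is legitimate. The $2\times2$ systems \eqref{eq:svd_system} are nonsingular because $s_i^2\neq s_j^2$.

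\textbf{Orthogonal reduction.} Write $P=U^\top \mathrm{d}M\,V$ and $\mathrm{d}R=U\Phi V^\top$. Since $\mathrm{vec}(U X V^\top)=(V\otimes U)\,\mathrm{vec}(X)$ and $V\otimes U$ is orthogonal, we get
\[
J_{\mathrm{SVD}}=(V\otimes U)\,L\,(V\otimes U)^\top,
\]
where $L:\mathrm{vec}(P)\mapsto\mathrm{vec}(\Phi)$. Therefore $J_{\mathrm{SVD}}$ and $L$ have the same singular values, rank and null-space dimension.

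\textbf{Spectrum of $L$.} By \eqref{eq:phi_full}, $\Phi_{ii}=0$ and $\Phi_{ij}$ depends only on $P_{ij}-P_{ji}$. Consequently every symmetric $P$ (a $6$-dimensional subspace) lies in the kernel of $L$.

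On the $3$-dimensional antisymmetric subspace, take the orthonormal basis $B_{ij}=(E_{ij}-E_{ji})/\sqrt2$ for $i<j$. Plugging $P=B_{ij}$ into \eqref{eq:phi_full} gives $\Phi=\mp\frac{2}{s_i+s_j}B_{ij}$, the sign depending only on the convention in \eqref{eq:phi_full}. So $L$ maps the antisymmetric subspace to itself and is diagonal in this basis with entries of modulus $2/(s_i+s_j)$.

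Since $L$ kills the orthogonal complement of the antisymmetric subspace, $L$ is block diagonal. Its singular values are therefore $2/(s_1+s_2)$, $2/(s_1+s_3)$, $2/(s_2+s_3)$ and six zeros. This gives rank $3$ and a $6$-dimensional null space, consistent with $\mathrm{d}R$ lying in the $3$-dimensional tangent space $U\,\mathfrak{so}(3)\,V^\top$ of $\SO(3)$ at $R$.

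\textbf{Consequences.} From $s_1>s_2>s_3$ we get $s_2+s_3<s_1+s_3<s_1+s_2$. Hence
\[
\norm{J_{\mathrm{SVD}}}_2=\frac{2}{s_2+s_3},
\]
the smallest nonzero singular value is $2/(s_1+s_2)$, and the ratio of the two is $\kappa=(s_1+s_2)/(s_2+s_3)$. The limit $s_3\to0$ with $s_2$ also small yields the stated blow-up $O(1/(s_2+s_3))$, which becomes $O(1/s_3)$ in the regime $s_2\sim s_3$.

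\textbf{Main obstacle.} The only delicate step is confirming the coefficient in \eqref{eq:phi_full}. One must check that an antisymmetric unit input produces exactly the factor $2/(s_i+s_j)$, i.e.\ that $P_{ij}-P_{ji}=2a$ for the entry $a$, and that no factor is lost from the intermediate expression with $s_j-s_i$. As an independent sanity check, I would verify the coefficient directly from the polar-decomposition identity $\mathrm{d}M=\mathrm{d}R\,S+R\,\mathrm{d}S$ with $S=V\Sigma V^\top$. That identity gives $P = \Phi\Sigma + \text{(symmetric part)}$, whose antisymmetric part yields $\Phi_{ij}(s_i+s_j)=P_{ij}-P_{ji}$, which confirms the factor $2/(s_i+s_j)$.
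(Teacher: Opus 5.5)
Your proposal is correct and follows the paper's proof. You reduce $J_{\mathrm{SVD}}$ via the orthogonal map $V\otimes U$ to $P\mapsto\Phi$, observe that every symmetric $P$ (the paper's diagonal and symmetric off-diagonal pieces together) lies in the kernel, and read off the gains $2/(s_i+s_j)$ on the antisymmetric basis $B_{ij}$. Your polar-decomposition check actually gives the correct sign $\Phi_{ij}=(P_{ij}-P_{ji})/(s_i+s_j)$; the paper's \eqref{eq:phi_full} has a sign slip from its solution of \eqref{eq:svd_system}, which, as you note, does not affect the singular values.
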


\begin{proof}
From \eqref{eq:phi_full}, $\mathrm{d}R = U\Phi V^\top$ where $\Phi_{ij} = (P_{ji} - P_{ij})/(s_i + s_j)$ and $P = U^\top \mathrm{d}M\, V$.  Since $U, V$ are orthogonal, $\frob{\mathrm{d}R} = \frob{\Phi}$ and $\frob{\mathrm{d}M} = \frob{P}$, so the singular values of $J_{\mathrm{SVD}}$ equal those of the linear map $\mathcal{L}: P \mapsto \Phi$.

The $9$ entries of $P$ decompose into orthogonal subspaces:
\begin{enumerate}
    \item \textbf{Diagonal entries} $P_{ii}$ ($3$ dimensions): $\Phi_{ii} = 0$, so these lie in the null space.
    \item \textbf{Symmetric off-diagonal combinations} $(P_{ij} + P_{ji})/\sqrt{2}$ for $i < j$ ($3$ dimensions): since $\Phi_{ij}$ depends on $P_{ji} - P_{ij}$, the symmetric combination maps to zero.  These also lie in the null space.
    \item \textbf{Antisymmetric off-diagonal combinations} $(P_{ji} - P_{ij})/\sqrt{2}$ for $i < j$ ($3$ dimensions): let $\alpha_{ij} = (P_{ji} - P_{ij})/\sqrt{2}$.  Then $\Phi_{ij} = \sqrt{2}\,\alpha_{ij}/(s_i + s_j)$ and $\Phi_{ji} = -\Phi_{ij}$, giving $\frob{\Phi}^2 = 2\Phi_{ij}^2 = 4\alpha_{ij}^2/(s_i+s_j)^2$.
\end{enumerate}
The null space has dimension $3 + 3 = 6$, confirming rank $3$.  For each pair $(i,j)$ with $i < j$, a unit antisymmetric input $\alpha_{ij} = 1$ produces $\frob{\mathrm{d}R} = 2/(s_i + s_j)$.  Since the three antisymmetric subspaces are orthogonal and map to orthogonal outputs (distinct entries of the antisymmetric matrix $\Phi$), the three nonzero singular values of $J_{\mathrm{SVD}}$ are exactly $\{2/(s_i+s_j)\}_{i<j}$.
\end{proof}

\begin{corollary}[Universality: $\det(M) < 0$]\label{cor:det_negative}
When $\det(M) < 0$, $\mathrm{SVDO}^+(M) = U\diag(1,1,-1)V^\top$.  The sign flip changes \emph{which} input subspace drives the output (symmetric off-diagonal for pairs involving index $3$, instead of antisymmetric), but the Jacobian spectrum is \textbf{identical} to the $\det(M) > 0$ case: $\sigma(J) = \{2/(s_1+s_2),\, 2/(s_1+s_3),\, 2/(s_2+s_3),\, 0,\ldots,0\}$.  The apparent $1/(s_i - s_3)$ singularity in the backward pass formula (obtained by absorbing $\diag(1,1,-1)$ into the gradient matrix) is a coordinate artifact: the numerator vanishes proportionally, so no additional divergence occurs.  See \Cref{app:det_negative} for the full proof.
\end{corollary}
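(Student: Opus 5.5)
The plan is to repeat the proof of \Cref{thm:svd_gradient_bound}, now carrying the sign matrix $D = \diag(1,1,-1)$ through the differential. I will assume, as in the theorem, distinct singular values $s_1 > s_2 > s_3 > 0$. Then $\det(M) < 0$ is an open condition, so $\det(UV^\top) = -1$ is locally constant and $D$ is fixed under small perturbations. Hence $R = UDV^\top$ is smooth near $M$, and the only variation comes from $U$ and $V$.

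Differentiating gives $\mathrm{d}R = \mathrm{d}U\, D V^\top + U D\, \mathrm{d}V^\top$. With the notation $A = U^\top \mathrm{d}U$ and $\Omega = V^\top \mathrm{d}V$ from \eqref{eq:svd_diff}, this becomes $\mathrm{d}R = U \Psi V^\top$, where $\Psi = AD - D\Omega$. The quantities $A_{ij}$ and $\Omega_{ij}$ are still given by \eqref{eq:omega_full}, because the $2\times 2$ system \eqref{eq:svd_system} involves only $M$ and not $D$. As before, $U$ and $V$ are orthogonal, so the singular values of $J$ equal those of the linear map $P \mapsto \Psi$.

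Next I compute $\Psi$ entrywise with $d = (1,1,-1)$. We have $\Psi_{ii} = A_{ii} d_i - d_i \Omega_{ii} = 0$, since $A$ and $\Omega$ are antisymmetric.

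\begin{itemize}
\item For the pair $(1,2)$, $d_1 = d_2 = 1$, so $\Psi_{12} = A_{12} - \Omega_{12}$. This coincides with $\Phi_{12}$ in \eqref{eq:phi_full}. It is driven by the antisymmetric combination $(P_{21} - P_{12})/\sqrt{2}$ and has gain $2/(s_1+s_2)$, exactly as in the theorem.
\item For a pair $(i,3)$ with $i \in \{1,2\}$, $\Psi_{i3} = -A_{i3} - \Omega_{i3}$. Using $A_{3i} = -A_{i3}$, $\Omega_{3i} = -\Omega_{i3}$ and $d_3 = -1$, we get $\Psi_{3i} = -A_{i3} - \Omega_{i3} = \Psi_{i3}$. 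So this block of $\Psi$ is symmetric rather than antisymmetric.
\end{itemize}

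The key computation is to add the two formulas in \eqref{eq:omega_full} rather than subtract them:
\[
A_{ij} + \Omega_{ij} = \frac{(s_j - s_i)(P_{ij} + P_{ji})}{s_j^2 - s_i^2} = \frac{P_{ij} + P_{ji}}{s_i + s_j}.
\]
This gives $\Psi_{i3} = \Psi_{3i} = -(P_{i3} + P_{3i})/(s_i + s_3)$. This identity is the step I expect to carry the real content. Written as $1/(s_i - s_3)$ times a numerator, the formula looks singular. The factorisation shows the numerator contains a factor $s_3 - s_i$, which cancels the singular part and leaves the benign denominator $s_i + s_3$. This is precisely the ``coordinate artifact'' mentioned in the statement.

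I then read off the spectrum using the same orthogonal decomposition of $P$ as in the theorem:
\begin{itemize}
\item The three diagonal directions lie in the kernel.
\item The antisymmetric $(1,2)$ direction maps to an antisymmetric output with gain $2/(s_1+s_2)$. The symmetric $(1,2)$ direction lies in the kernel.
\item For $i \in \{1,2\}$, the unit symmetric input $\beta_{i3} = (P_{i3} + P_{3i})/\sqrt{2}$ gives $\frob{\Psi}^2 = 2\Psi_{i3}^2 = 4\beta_{i3}^2/(s_i+s_3)^2$, so its gain is $2/(s_i+s_3)$. The antisymmetric $(i,3)$ directions lie in the kernel.
\end{itemize}
The three active input directions are mutually orthogonal, and their images occupy disjoint entries of $\Psi$, so the images are orthogonal too. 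Therefore the nonzero singular values are exactly $\{2/(s_1+s_2),\, 2/(s_1+s_3),\, 2/(s_2+s_3)\}$, with a $6$-dimensional null space. This matches \Cref{thm:svd_gradient_bound}.

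Finally, I would note that $\Psi$ lies in the tangent space of $\SO(3)$ at $R$, as it must. Indeed, $R^\top \mathrm{d}R = V D \Psi V^\top$, and $D\Psi = DAD - \Omega$ is antisymmetric. This serves as a consistency check on the rank-$3$ claim.
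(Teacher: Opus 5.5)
Your argument follows the paper's proof in \Cref{app:det_negative} step for step: carry $\Sigma'$ through $\Psi = A\Sigma' - \Sigma'\Omega$, reuse the $2\times 2$ system, and go pair by pair. It fails at the same step as the paper. \eqref{eq:omega_full} is not the solution of \eqref{eq:svd_system}: substituting it into the first equation gives $\bigl((s_i^2+s_j^2)P_{ij} - 2s_is_jP_{ji}\bigr)/(s_j^2-s_i^2) \neq P_{ij}$. The correct inverse is $A_{ij} = (s_jP_{ij} + s_iP_{ji})/(s_j^2-s_i^2)$ and $\Omega_{ij} = (s_iP_{ij} + s_jP_{ji})/(s_j^2-s_i^2)$.

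With these formulas the roles of the two factors swap:
\begin{itemize}
\item $A_{ij} - \Omega_{ij} = (P_{ij}-P_{ji})/(s_i+s_j)$, so \Cref{thm:svd_gradient_bound} and your pair $(1,2)$ survive.
\item $A_{ij} + \Omega_{ij} = (P_{ij}+P_{ji})/(s_j - s_i)$, so $\Psi_{i3} = \Psi_{3i} = (P_{i3}+P_{3i})/(s_i - s_3)$.
\end{itemize}
The gains for the pairs $(1,3)$ and $(2,3)$ are therefore $2/(s_1-s_3)$ and $2/(s_2-s_3)$. The cancellation you singled out as carrying the real content is exactly the wrong step: the factor that cancels is $s_i+s_3$, and the one that survives is $s_i - s_3$.

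You also applied \eqref{eq:omega_full} inconsistently. Taken literally, it gives $A_{12}-\Omega_{12} = (P_{12}-P_{21})/(s_2-s_1)$. \eqref{eq:phi_full} reaches $1/(s_1+s_2)$ only through an invalid last equality. You used that final form for pair $(1,2)$ but the raw formulas for the pairs $(i,3)$, and the mismatch should have been a warning sign.

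This cannot be patched, because the statement is false. Let $e_k$ be the standard basis vectors and take $M = \diag(s_1,s_2,-s_3) + \epsilon(e_2e_3^\top - e_3e_2^\top)$ with $s_1>s_2>s_3>0$ and $\epsilon$ small, so that $\det(M) = s_1(\epsilon^2 - s_2s_3) < 0$. Then $\mathrm{SVDO}^+(M) = \diag(1,R_\phi)$, where $R_\phi$ is the planar rotation in the $(e_2,e_3)$ block maximizing $(s_2-s_3)\cos\phi - 2\epsilon\sin\phi$. Hence $|\phi| \approx 2\epsilon/(s_2-s_3)$. Since $\frob{\mathrm{d}M} = \sqrt{2}\,\epsilon$ and $\frob{\mathrm{d}R} = \sqrt{2}\,|\phi|$, the gain is $2/(s_2-s_3)$, not $2/(s_2+s_3)$.

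At $s_2=s_3$ with $\det(M)<0$, every $\diag(1,R_\phi)$ is a nearest rotation to $\diag(s_1,s_2,-s_2)$, so $\mathrm{SVDO}^+$ is genuinely discontinuous there. The $1/(s_i-s_3)$ blow-up is therefore not a coordinate artifact. Singular values of a linear map are basis-independent, and nothing forces the upstream component $\tilde X_{i3}$ to vanish.

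The true spectrum for $\det(M)<0$ is $\{2/(s_1+s_2),\,2/(s_1-s_3),\,2/(s_2-s_3),\,0,\dots,0\}$. This agrees with the paper's own definition of $\delta(M)$ in that regime.

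Your side observations are correct: $D$ is locally constant, and $D\Psi$, not $\Psi$, is antisymmetric (the latter fixes a slip in the paper). Neither observation detects the sign error.
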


\begin{remark}[Gradient instability during training]\label{rem:instability}
The gradient pathology identified in \Cref{thm:svd_gradient_bound} is most severe early in training when the network output $M$ is far from $\SO(3)$ and $s_3$ may be near zero, giving $O(1/s_3)$ gradient amplification.  As training progresses and singular values approach 1, the denominators $s_i + s_j \approx 2$ and the gradients stabilize, consistent with \citet{levinson2020analysis}'s empirical observation that SVD-Train gradient norms remain comparable to GS-Train.

However, even in the well-conditioned regime, the condition number $\kappa(J_{\mathrm{SVD}}) \geq (s_1 + s_2)/(s_2 + s_3)$ creates \emph{anisotropic} gradient scaling: perturbations in the $(2,3)$ singular vector plane are amplified relative to the $(1,2)$ plane. This anisotropy conflicts with the isotropic step sizes of standard optimizers (SGD, Adam), and any transient perturbation that drives $s_3$ toward zero (mini-batch noise, learning rate) can trigger temporary gradient explosion.
\end{remark}

\begin{remark}[Singular value switching]
When two singular values cross (e.g., $s_1$ and $s_2$ exchange ordering), the corresponding singular vectors swap, causing a discrete jump in $R = UV^\top$. At the crossing point $s_i = s_j$, the determinant $s_j^2 - s_i^2$ in \eqref{eq:svd_system} vanishes, and the gradient is undefined. Near the crossing, gradients exhibit discontinuous behavior. During training, singular values fluctuate continuously, making such crossings inevitable.
\end{remark}

\subsection{Even Stabilized SVD Gradients Are Suboptimal}

One might ask whether the SVD gradient pathology can be ``fixed'' rather than avoided. \citet{wang2021robust} proposed exactly this: replace the unstable kernel $1/(s_i^2 - s_j^2)$ with a $K$-th degree Taylor expansion and clamp singular values above a threshold $\varepsilon > 0$.

\begin{remark}[Comparison with stabilized SVD gradients]\label{prop:taylor_comparison}
\citet{wang2021robust} bound the Taylor-stabilized SVD gradient scaling factor by $n(K+1)/\varepsilon$ (independent of the singular value gap), but report worst-case direction error of $5.71°$ with $K = 9$ (when the dominant eigenvalue covers $\geq 50\%$ of energy), and $50\%$ training failure with $K=19$, $\varepsilon = 10^{-4}$.

We note that this comparison is structurally asymmetric: SVD-Train and direct regression optimize \emph{different} loss functions ($\frob{\mathrm{SVDO}^+(M) - R^*}^2$ vs.\ $\frob{M - R^*}^2$), so ``direction error'' is measured relative to each method's own objective.  Nevertheless, the comparison highlights that \emph{even within the SVD-Train framework}, stabilization introduces an unavoidable magnitude-vs-direction tradeoff with two hyperparameters.  Direct regression avoids this tradeoff entirely: its gradient $2(M - R^*)$ is exact, with $\kappa = 1$ and no hyperparameters.
\end{remark}

\begin{figure*}[t]
    \centering
    \includegraphics[width=\textwidth]{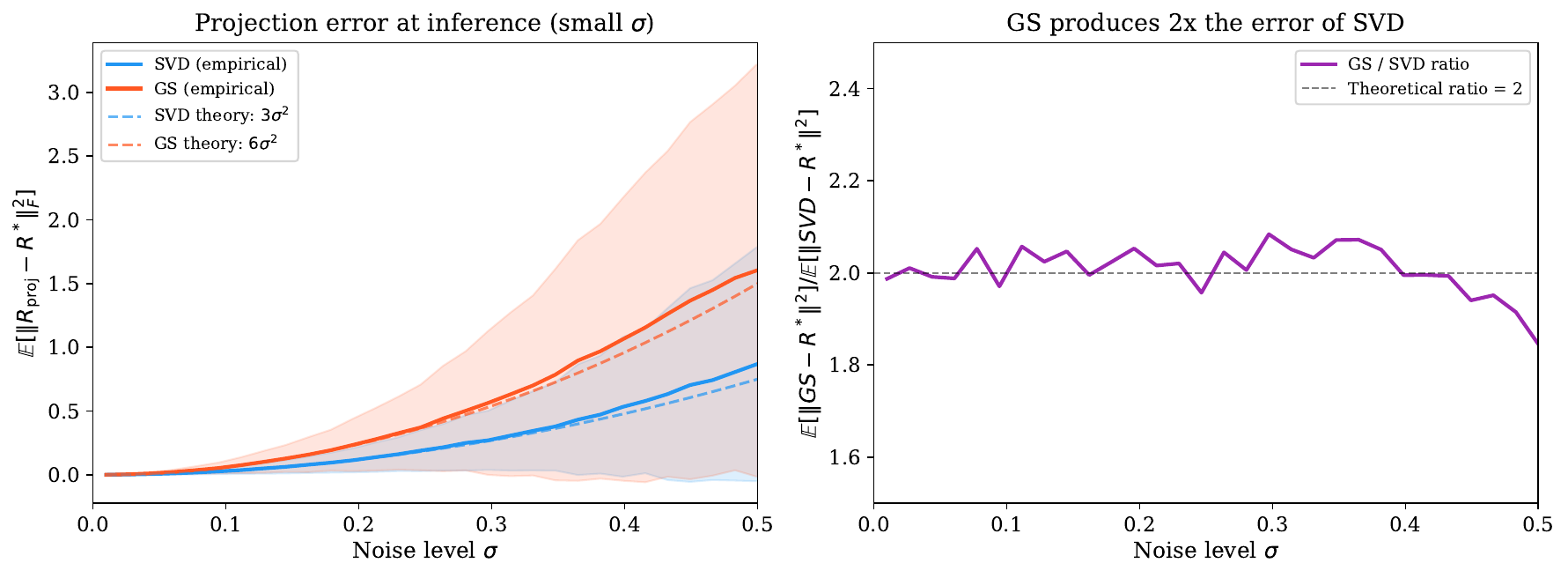}
    \caption{SVD vs.\ Gram-Schmidt projection error at inference, restricted to the small-$\sigma$ regime where the first-order theory of \citet{levinson2020analysis} is accurate (and where trained networks operate). \textbf{Left:} Expected squared Frobenius error $\mathbb{E}[\frob{R_{\mathrm{proj}} - R^*}^2]$ with $M = R^* + \sigma N$. Empirical results (solid, $5{,}000$ samples per $\sigma$) closely match the first-order predictions (dashed): $3\sigma^2 + O(\sigma^3)$ for SVD and $6\sigma^2 + O(\sigma^3)$ for GS. \textbf{Right:} The ratio of GS to SVD error stays near the theoretical value of 2 across the entire range.}
    \label{fig:projection_error}
\end{figure*}

\begin{figure*}[t]
    \centering
    \includegraphics[width=\textwidth]{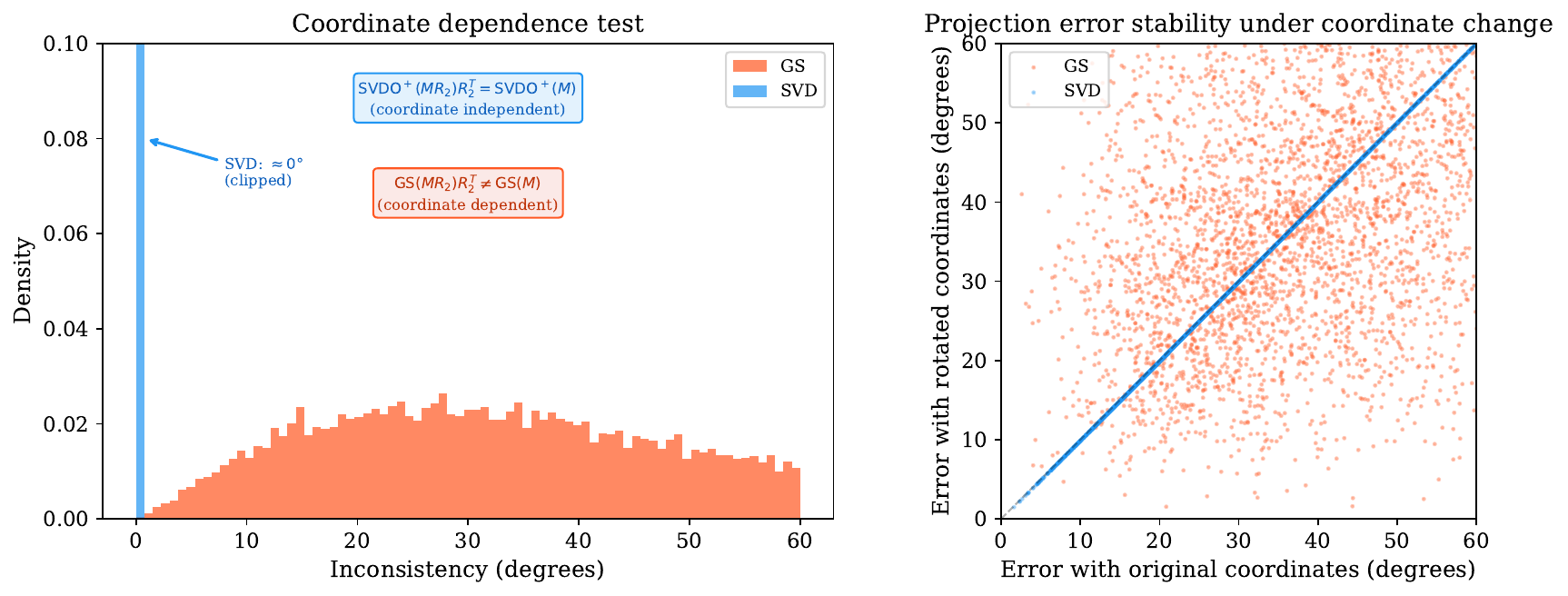}
    \caption{Coordinate dependence test. For random $M$ and $R_2 \in \SO(3)$, we compare $g(M)$ with $g(MR_2)R_2^\top$: for a coordinate-independent projector, these should be identical. \textbf{Left:} SVD produces zero inconsistency (spike at $0°$), confirming $\mathrm{SVDO}^+(MR_2)R_2^\top = \mathrm{SVDO}^+(M)$. GS spreads over $10$--$60°$, showing its result depends on the choice of coordinate frame. \textbf{Right:} SVD projection error is identical regardless of coordinates (points on the diagonal), while GS error changes unpredictably under coordinate rotation.}
    \label{fig:coordinate_dependence}
\end{figure*}

\subsection{Gradient Information Loss}

The rank-$3$ Jacobian means that $6$ of $9$ gradient dimensions are projected to zero during SVD backpropagation.  We quantify this loss.

\begin{proposition}[Gradient information retention]\label{prop:info_loss}
For an isotropic random gradient $\nabla_R \mathcal{L} \sim \mathcal{N}(0, \Id_9)$, define the \emph{gradient information retention} (GIR) as $\eta(J) = \mathbb{E}[\norm{J^\top \nabla_R \mathcal{L}}^2] / \mathbb{E}[\norm{\nabla_R \mathcal{L}}^2]$.  Then:
\begin{enumerate}
    \item For SVD-Train near $\SO(3)$ ($s_1, s_2, s_3 \approx 1$): $\eta(J_{\mathrm{SVD}}) = \frac{1}{3}$.  Two-thirds of gradient energy is lost.
    \item For direct 9D regression: $\eta(J_{\mathrm{id}}) = 1$.  All gradient energy is retained.
\end{enumerate}
The lost $6$-dimensional component corresponds to the normal space of $\SO(3)$ at $R$: perturbations that change singular values ($3$D) and symmetric off-diagonal deformations ($3$D).  This normal component carries information about \emph{how far $M$ is from $\SO(3)$}.  SVD discards it, so the optimizer receives no signal pushing $M$ toward orthogonality.  Direct regression retains this signal, explaining why networks trained with MSE naturally converge to near-orthogonal outputs~\citep{gu2024prom}.
\end{proposition}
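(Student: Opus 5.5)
The plan is to reduce the expectation to a trace and then read the trace off the spectrum in \Cref{thm:svd_gradient_bound}. For any fixed $J \in \R^{9\times 9}$ and $g = \nabla_R \mathcal{L} \sim \mathcal{N}(0,\Id_9)$, I would use $\mathbb{E}[\norm{J^\top g}^2] = \mathbb{E}[g^\top J J^\top g] = \tr(J J^\top) = \frob{J}^2 = \sum_k \sigma_k(J)^2$, together with $\mathbb{E}[\norm{g}^2] = 9$. This gives $\eta(J) = \frac{1}{9}\sum_k \sigma_k(J)^2$, so the problem becomes a spectral computation.

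For direct regression, $J_{\mathrm{id}} = \Id_9$, so $\sum_k \sigma_k^2 = 9$ and $\eta = 1$; this case is immediate. For SVD-Train, \Cref{thm:svd_gradient_bound} gives $\sum_k \sigma_k^2 = \sum_{i<j} 4/(s_i+s_j)^2$, which I would state as the general formula for $\eta(J_{\mathrm{SVD}})$. In the regime $s_1,s_2,s_3 \to 1$ each term tends to $1$, so the sum is $3$ and $\eta = 3/9 = 1/3$. By \Cref{cor:det_negative} the spectrum is identical when $\det(M)<0$, so the sign of the determinant does not matter.

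The main obstacle is that the theorem assumes distinct singular values, which fails exactly at $s_i = 1$. I would handle this in two ways. First, interpret ``near $\SO(3)$'' as the limit along matrices with distinct singular values, where the formula is continuous. Second, and more cleanly, compute directly at $M = R^* \in \SO(3)$ with $U = R^*$, $V = \Id$. Here \eqref{eq:phi_full} gives $\Phi = \tfrac12(P^\top - P)$ with $P = R^{*\top}\mathrm{d}M$, i.e.\ $J$ is the orthogonal projection onto the skew part. Its eigenvalues are $1$ on the $3$-dimensional skew subspace and $0$ on the $6$-dimensional symmetric subspace, so $\frob{J}^2 = 3$. This computation is well defined regardless of the repeated singular values, because $UV^\top$ is smooth at invertible $M$ even though $U$ and $V$ individually are not.

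Finally, the identification of the lost component uses the same decomposition. The null space is $\{R\,S : S = S^\top\}$, which is the normal space of $\SO(3)$ at $R$ under the Frobenius inner product, since the tangent space is $\{R\,\Omega : \Omega = -\Omega^\top\}$. It splits into the diagonal part ($3$D, changing singular values) and the symmetric off-diagonal part ($3$D), matching items 1 and 2 of the proof of \Cref{thm:svd_gradient_bound}. I would close by noting that $\frob{M - \mathrm{SVDO}^+(M)}$ varies only along these normal directions, which justifies the interpretive claim about distance from $\SO(3)$.
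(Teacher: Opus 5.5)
Your proof is correct and is essentially the paper's: reduce $\eta(J)$ to $\tr(JJ^\top)/9$ and sum the squared singular values $4/(s_i+s_j)^2$ from \Cref{thm:svd_gradient_bound}. Your direct evaluation at $M=R^*$ is a useful addition, because the paper applies a distinct-singular-value theorem at the degenerate point $s_i=1$ without comment. That evaluation is cleanest via the polar factor $M(M^\top M)^{-1/2}$ at $M=R^*(\Id+tP)$: this gives $\mathrm{d}R=R^*\tfrac12(P-P^\top)$ without differentiating $U,V$, and it also fixes the sign you inherited from \eqref{eq:phi_full}, which is harmless for $\frob{J}^2$.

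Do drop the appeal to \Cref{cor:det_negative}. Near $\SO(3)$ one has $\det M>0$, so it is unnecessary. It is also unreliable: for $\det M<0$ the pairs $(i,3)$ actually have gain $2/(s_i-s_3)$, and the nearest rotation to $\diag(1,1,-1)$ is not even unique. So ``$s_i\approx 1$'' must be read as ``$M$ near $\SO(3)$'' for $\eta=1/3$ to hold.
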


\begin{proof}
Since $\mathbb{E}[\norm{J^\top \mathbf{g}}^2] = \tr(J J^\top)$ for $\mathbf{g} \sim \mathcal{N}(0, \Id_9)$, and $J_{\mathrm{SVD}}$ has nonzero singular values $\{2/(s_i+s_j)\}_{i<j}$, we get $\tr(J_{\mathrm{SVD}} J_{\mathrm{SVD}}^\top) = \sum_{i<j} 4/(s_i+s_j)^2$.  Near $\SO(3)$, each term equals $1$, giving $\eta = 3/9 = 1/3$.
\end{proof}

\section{Gram-Schmidt Gradient Asymmetry}
\label{sec:gs_analysis}

SVD should be avoided during training.  What about Gram-Schmidt?  The 6D representation~\citep{zhou2019continuity} uses GS instead, but GS introduces a different problem: \emph{gradient asymmetry}.

\citet{gu2024prom} identified gradient coupling and explosion for GS (their Eq.~5, Section~3.1, Appendix~C.1). We formalize the asymmetric Jacobian structure, specifically the one-directional coupling (Part~2) and condition number bound (Part~4), explaining why 9D is preferable to 6D even without orthogonalization.

\begin{theorem}[Gram-Schmidt Jacobian asymmetry]\label{thm:gs_asymmetry}
Let $g_{\mathrm{GS}}: \R^6 \to \SO(3)$ be the Gram-Schmidt orthogonalization defined in \eqref{eq:gs}, mapping $(\mathbf{t}_1', \mathbf{t}_2') \mapsto (\mathbf{r}_1', \mathbf{r}_2', \mathbf{r}_3')$. Let $J_{\mathrm{GS}} = \pderiv{\mathrm{vec}(R)}{\mathrm{vec}(\mathbf{t}_1', \mathbf{t}_2')} \in \R^{9 \times 6}$. Then:

\begin{enumerate}
    \item \textbf{Column 1 is self-contained:} $\pderiv{\mathbf{r}_1'}{\mathbf{t}_1'}$ depends only on $\mathbf{t}_1'$, not $\mathbf{t}_2'$. Its singular values are $\{1/\norm{\mathbf{t}_1'}, 1/\norm{\mathbf{t}_1'}, 0\}$ (rank 2 due to radial degeneracy).

    \item \textbf{Column 2 depends on Column 1:} $\pderiv{\mathbf{r}_2'}{\mathbf{t}_1'}$ is generically nonzero while $\pderiv{\mathbf{r}_1'}{\mathbf{t}_2'} = 0$. This creates a one-directional coupling: errors in $\mathbf{r}_2'$ affect updates to $\mathbf{t}_1'$, but errors in $\mathbf{r}_1'$ never affect updates to $\mathbf{t}_2'$.

    \item \textbf{Column 3 has no dedicated parameters:} Since $\mathbf{r}_3' = \mathbf{r}_1' \times \mathbf{r}_2'$, the gradient $\pderiv{\mathbf{r}_3'}{\mathbf{t}_k'}$ is entirely mediated through $\mathbf{r}_1'$ and $\mathbf{r}_2'$, compounding the distortion from two normalization layers.

    \item \textbf{Condition number diverges:} As $\mathbf{t}_1'$ and $\mathbf{t}_2'$ become parallel (i.e., $\mathbf{r}_2'' = \mathbf{t}_2' - (\mathbf{r}_1' \cdot \mathbf{t}_2')\mathbf{r}_1' \to 0$), the condition number of the Jacobian restricted to the column space satisfies:
    \begin{equation}
        \kappa(J_{\mathrm{GS}}) \geq \frac{\norm{\mathbf{t}_1'}}{\norm{\mathbf{r}_2''}}.
    \end{equation}
\end{enumerate}
\end{theorem}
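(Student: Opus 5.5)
The plan is to compute the Jacobian of \eqref{eq:gs} block by block, using the chain rule through the two normalization maps and the cross product, and then read off each of the four claims from the block structure.

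The basic tool is the derivative of normalization $\mathbf{x} \mapsto \mathbf{x}/\norm{\mathbf{x}}$, which is
\begin{equation*}
\frac{1}{\norm{\mathbf{x}}}\left(\Id - \hat{\mathbf{x}}\hat{\mathbf{x}}^\top\right), \qquad \hat{\mathbf{x}} = \mathbf{x}/\norm{\mathbf{x}}.
\end{equation*}

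\textbf{Part 1.} Applying this with $\mathbf{x} = \mathbf{t}_1'$ gives
\begin{equation*}
\pderiv{\mathbf{r}_1'}{\mathbf{t}_1'} = \norm{\mathbf{t}_1'}^{-1}\left(\Id - \mathbf{r}_1'\mathbf{r}_1'^\top\right).
\end{equation*}
This expression involves only $\mathbf{t}_1'$. It is a scaled orthogonal projector onto $\mathbf{r}_1'^\perp$, so its singular values are $1/\norm{\mathbf{t}_1'}$ (twice) and $0$, with kernel the radial direction.

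\textbf{Part 2.} Since $\mathbf{r}_1'$ is a function of $\mathbf{t}_1'$ alone, $\pderiv{\mathbf{r}_1'}{\mathbf{t}_2'}=0$ immediately. For the other block, write $\mathbf{r}_2'' = (\Id - \mathbf{r}_1'\mathbf{r}_1'^\top)\mathbf{t}_2'$ and differentiate in $\mathbf{r}_1'$:
\begin{equation*}
\mathrm{d}\mathbf{r}_2'' = -\left(\mathbf{r}_1'\mathbf{t}_2'^\top + (\mathbf{r}_1'\cdot\mathbf{t}_2')\Id\right)\mathrm{d}\mathbf{r}_1'.
\end{equation*}
Composing with the normalization derivative and with Part 1 gives
\begin{equation*}
\pderiv{\mathbf{r}_2'}{\mathbf{t}_1'} = -\norm{\mathbf{r}_2''}^{-1}\left(\Id - \mathbf{r}_2'\mathbf{r}_2'^\top\right)\left(\mathbf{r}_1'\mathbf{t}_2'^\top + (\mathbf{r}_1'\cdot\mathbf{t}_2')\Id\right)\norm{\mathbf{t}_1'}^{-1}\left(\Id - \mathbf{r}_1'\mathbf{r}_1'^\top\right).
\end{equation*}
To show this is generically nonzero, it suffices to evaluate it on the vector $\mathbf{r}_2'$. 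That vector is orthogonal to $\mathbf{r}_1'$, so the rightmost projector fixes it and the $\mathbf{r}_1'\mathbf{t}_2'^\top$ term contributes a multiple of $\mathbf{r}_1'$. The result is
\begin{equation*}
\pderiv{\mathbf{r}_2'}{\mathbf{t}_1'}\,\mathbf{r}_2' = -\frac{\mathbf{r}_2'\cdot\mathbf{t}_2'}{\norm{\mathbf{r}_2''}\norm{\mathbf{t}_1'}}\,\mathbf{r}_1'.
\end{equation*}
Since $\mathbf{r}_2'\cdot\mathbf{t}_2' = \norm{\mathbf{r}_2''}$, this equals $-\mathbf{r}_1'/\norm{\mathbf{t}_1'} \neq 0$. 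The block is therefore nonzero whenever $\mathbf{r}_2'' \neq 0$. The back-propagation statement is the transpose reading: the $\mathbf{t}_2'$-rows of $J_{\mathrm{GS}}^\top$ receive nothing from the $\mathbf{r}_1'$-columns.

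\textbf{Part 3.} This is structural. Use
\begin{equation*}
\mathrm{d}\mathbf{r}_3' = \mathrm{d}\mathbf{r}_1'\times\mathbf{r}_2' + \mathbf{r}_1'\times\mathrm{d}\mathbf{r}_2' = -[\mathbf{r}_2']_\times\,\mathrm{d}\mathbf{r}_1' + [\mathbf{r}_1']_\times\,\mathrm{d}\mathbf{r}_2'.
\end{equation*}
Every block $\pderiv{\mathbf{r}_3'}{\mathbf{t}_k'}$ thus factors through the blocks computed in Parts 1 and 2, each carrying the normalization factors $1/\norm{\mathbf{t}_1'}$ or $1/\norm{\mathbf{r}_2''}$.

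\textbf{Part 4.} This is the main obstacle, because it requires an upper bound on one singular value and a lower bound on another, both restricted to the column space. I would find two test directions.

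For the large singular value, take $\mathrm{d}\mathbf{t}_2' = \mathbf{r}_3'$ with $\mathrm{d}\mathbf{t}_1' = 0$. Then $\mathrm{d}\mathbf{r}_1' = 0$ and
\begin{equation*}
\mathrm{d}\mathbf{r}_2' = \norm{\mathbf{r}_2''}^{-1}\,\mathbf{r}_3'.
\end{equation*}
This has norm $1/\norm{\mathbf{r}_2''}$, so $\sigma_{\max} \geq 1/\norm{\mathbf{r}_2''}$.

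For the small singular value, I need a unit input whose output is nonzero but of order $1/\norm{\mathbf{t}_1'}$, and which lies in the row space. The candidate is the joint rotation direction about $\mathbf{r}_3'$:
\begin{equation*}
\mathrm{d}\mathbf{t}_1' = \norm{\mathbf{t}_1'}\,\mathbf{r}_2', \qquad \mathrm{d}\mathbf{t}_2' = \mathbf{r}_3' \times \mathbf{t}_2',
\end{equation*}
rescaled to unit norm. This is the infinitesimal rigid rotation of both inputs, and equivariance of GS under rotations turns it into a rotation of $R$ with bounded output. The difficulty is that $\norm{\mathbf{t}_2'}$ also enters the normalization, so the bound may come out as $\sigma_{\min} \lesssim \max(1, \norm{\mathbf{t}_2'})/\norm{\mathbf{t}_1'}$, up to constants. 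In that case the claimed inequality would hold only with a constant, or under a normalization such as $\norm{\mathbf{t}_2'} \lesssim \norm{\mathbf{t}_1'}$.

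A simpler alternative for the small value is the input $\mathrm{d}\mathbf{t}_1' = \mathbf{r}_3'$ alone. It changes $\mathbf{r}_1'$ by $\mathbf{r}_3'/\norm{\mathbf{t}_1'}$, but it also perturbs $\mathbf{r}_2'$ through the Part 2 coupling, which may reintroduce the $1/\norm{\mathbf{r}_2''}$ factor. I would therefore try the rigid-rotation direction first and compute its output exactly. Taking the ratio $\sigma_{\max}/\sigma_{\min}$ then yields $\kappa(J_{\mathrm{GS}}) \geq \norm{\mathbf{t}_1'}/\norm{\mathbf{r}_2''}$ in the parallel limit $\mathbf{r}_2'' \to 0$.
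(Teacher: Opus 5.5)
Parts 1--3 are fine and follow the paper's argument. Your evaluation of $\pderiv{\mathbf{r}_2'}{\mathbf{t}_1'}$ on $\mathbf{r}_2'$ is sharper than the paper's ``generically nonzero''.

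The gap is in Part~4. You never establish the upper bound on the smallest nonzero singular value $\sigma_{\min}^{+}$, and the candidate you settle on cannot supply it. A quotient $\norm{J_{\mathrm{GS}}x}/\norm{x}$ bounds $\sigma_{\min}^{+}$ from above only if $x \perp \ker J_{\mathrm{GS}}$, and you never identify the kernel. It is $\mathrm{span}\{(\mathbf{r}_1',0),(0,\mathbf{r}_1'),(0,\mathbf{r}_2')\}$: radial scaling of $\mathbf{t}_1'$, plus any change of $\mathbf{t}_2'$ within the plane of $\mathbf{r}_1',\mathbf{r}_2'$, which leaves $\mathbf{r}_2'$ unchanged. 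The $\mathbf{t}_2'$-part of your rigid-rotation input is $\mathbf{r}_3'\times\mathbf{t}_2' = (\mathbf{r}_1'\cdot\mathbf{t}_2')\mathbf{r}_2' - \norm{\mathbf{r}_2''}\mathbf{r}_1'$, which lies entirely in this kernel. Its quotient, $\sqrt{2}/(\norm{\mathbf{t}_1'}^2+\norm{\mathbf{t}_2'}^2)^{1/2}$, is therefore not an upper bound; it actually lies strictly below $\sigma_{\min}^{+}$. Your worry that $\norm{\mathbf{t}_2'}$ spoils the constant comes entirely from this kernel component. As written, Part~4 ends without a proof and with doubt that the stated inequality holds.

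The missing step is to project onto the row space. The projection of your rigid rotation is $(\norm{\mathbf{t}_1'}\mathbf{r}_2',0)$, i.e.\ the in-plane tangent direction $(\mathbf{r}_2',0)$ that you already differentiated in Part~2. It gives $\mathrm{d}\mathbf{r}_1'=\mathbf{r}_2'/\norm{\mathbf{t}_1'}$, $\mathrm{d}\mathbf{r}_2'=-\mathbf{r}_1'/\norm{\mathbf{t}_1'}$ and $\mathrm{d}\mathbf{r}_3'=0$. Hence $\sigma_{\min}^{+}\le\sqrt{2}/\norm{\mathbf{t}_1'}$, with no $\norm{\mathbf{t}_2'}$ or $1/\norm{\mathbf{r}_2''}$ anywhere.

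For the large value, keep the third column you dropped. The input $(0,\mathbf{r}_3')$ gives $\mathrm{d}\mathbf{r}_2'=\mathbf{r}_3'/\norm{\mathbf{r}_2''}$ and $\mathrm{d}\mathbf{r}_3'=-\mathbf{r}_2'/\norm{\mathbf{r}_2''}$, so $\sigma_{\max}\ge\sqrt{2}/\norm{\mathbf{r}_2''}$. With only $1/\norm{\mathbf{r}_2''}$ you would lose a factor $\sqrt{2}$.

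Dividing gives $\kappa(J_{\mathrm{GS}})\ge\norm{\mathbf{t}_1'}/\norm{\mathbf{r}_2''}$ exactly, for every input with $\mathbf{r}_2''\neq 0$, not only in the parallel limit. This is the paper's two-test-direction argument made precise. The paper's phrases ``$\mathrm{d}\mathbf{t}_2'$ orthogonal to $\mathbf{r}_1'$'' and ``$\mathrm{d}\mathbf{t}_1'$ tangent to the unit sphere'' are too loose. The direction $(0,\mathbf{r}_2')$ lies in the kernel. The direction $(\mathbf{r}_3',0)$ picks up a factor $(\mathbf{r}_1'\cdot\mathbf{t}_2')/(\norm{\mathbf{t}_1'}\norm{\mathbf{r}_2''})$ in $\mathrm{d}\mathbf{r}_2'$ through the coupling, exactly as you suspected.
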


The proof follows from standard normalization Jacobians and the chain rule through the cross product; see \Cref{app:gs_proof} for details.

\Cref{thm:gs_asymmetry} reveals a strict gradient hierarchy: $\mathbf{t}_1'$ parameters receive signal from all three rotation columns, while $\mathbf{t}_2'$ parameters are blind to column~1 errors.  This asymmetry also manifests at inference, where GS produces monotonically increasing per-column projection error (\Cref{fig:column_error}).

\begin{figure*}[t]
    \centering
    \includegraphics[width=\textwidth]{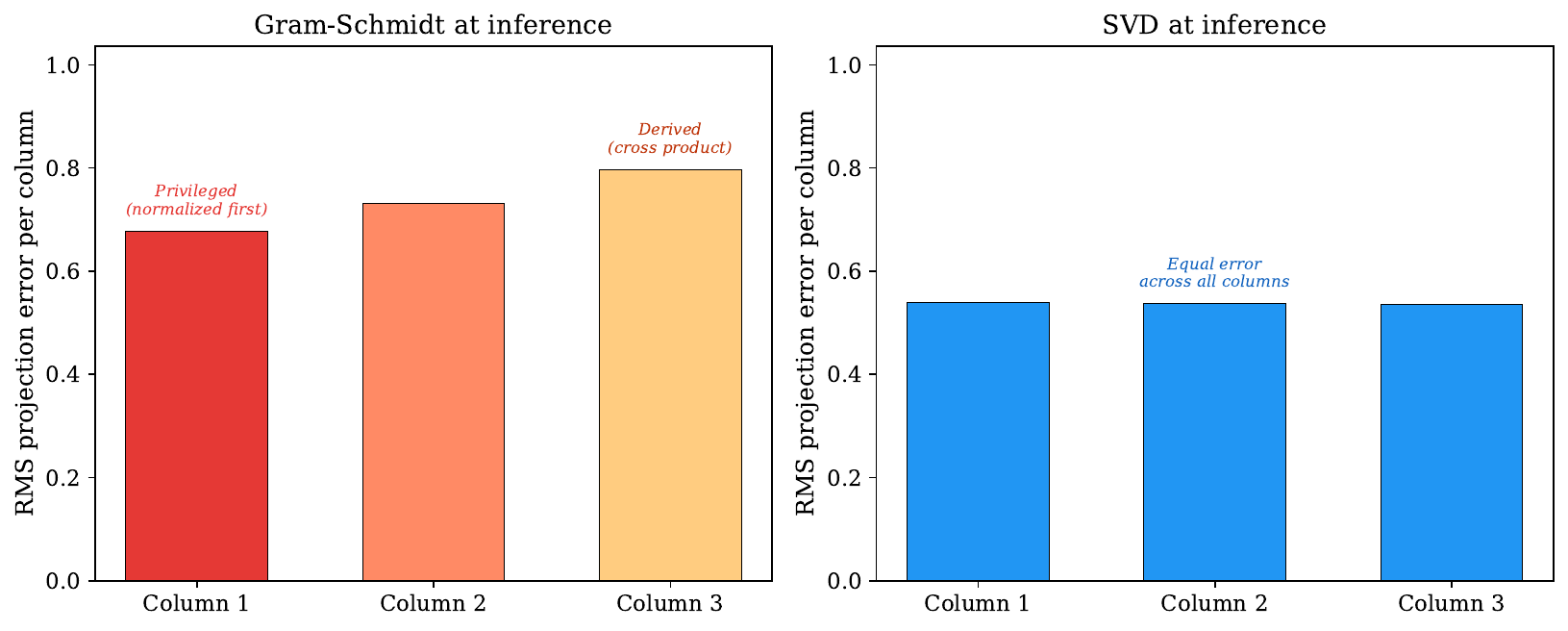}
    \caption{Per-column RMS projection error at inference ($\sigma = 0.5$, $50{,}000$ samples). \textbf{Left:} Gram-Schmidt produces increasing error from column 1 (privileged, normalized first) to column 3 (derived via cross product), reflecting its sequential, greedy forward-pass structure. \textbf{Right:} SVD distributes error equally across all three columns, a consequence of its coordinate-independence. SVD also achieves lower absolute error per column.}
    \label{fig:column_error}
\end{figure*}

\section{Direct 9D Regression and the Principled Synthesis}
\label{sec:synthesis}

Removing orthogonalization gives $J_{\mathrm{id}} = \Id_9$ with $\kappa = 1$ and no cross-coupling.\label{thm:identity_jacobian}  \Cref{tab:all_representations} summarizes the gradient properties of all common rotation representations.

\begin{table*}[t]
\caption{Comparison of rotation representations for gradient-based optimization.  ``Continuous $g$'' refers to whether the mapping from $\SO(3)$ to the representation space is continuous~\citep{zhou2019continuity}.  ``Training $\kappa$'' is the condition number of the representation Jacobian during training.  ``Inference proj.'' indicates the projection method applied at inference to obtain a valid rotation.  Representations above the mid-rule suffer from topological obstructions (discontinuity and/or double cover); those below are continuous but differ in gradient properties.}
\label{tab:all_representations}
\centering
\resizebox{\textwidth}{!}{%
\begin{tabular}{lcccccccc}
\toprule
\textbf{Representation} & \textbf{Dim} & \textbf{Cont.\ $g$} & \textbf{Double} & \textbf{Jacobian} & \textbf{Training} & \textbf{Gradient} & \textbf{Null} & \textbf{Inference} \\
 & & & \textbf{cover} & \textbf{rank} & $\kappa$ & \textbf{blow-up} & \textbf{space} & \textbf{proj.} \\
\midrule
Euler angles & 3 & No & No$^*$ & 3$^\dagger$ & $\to \infty$$^\dagger$ & Gimbal lock & 0$^\dagger$ & None \\
Exp.\ coordinates & 3 & No & Yes & 3$^\dagger$ & $\to \infty$$^\dagger$ & At $\norm{\omega} = 2\pi$ & 0$^\dagger$ & None \\
Axis-angle & 4 & No & Yes & 3 & $\to \infty$$^\dagger$ & At $\theta = \pi$ & 1 & Normalize \\
Quaternion & 4 & No & Yes & 3 & $= 1$ & Never$^\ddagger$ & 1 & Normalize \\
\midrule
$\R^6$ + GS~\citep{zhou2019continuity} & 6 & Yes & No & $\leq 5$ & $\geq \frac{\norm{\mathbf{t}_1'}}{\norm{\mathbf{r}_2''}}$ & $O(1/\norm{\mathbf{r}_2''})$ & $\geq 1$ & GS \\
$\R^9$ + SVD-Train~\citep{levinson2020analysis} & 9 & Yes & No & 3 & $\frac{s_1+s_2}{s_2+s_3}$ & $\frac{2}{s_2+s_3}$ & 6 & SVD \\
$\R^9$ + SVD-Inference~\citep{gu2024prom} & 9 & Yes & No & 9 & $= 1$ & Never & 0 & SVD \\
\bottomrule
\end{tabular}%
}

\vspace{0.5em}
\raggedright
{\scriptsize $^*$Multiple representations exist but not a true double cover. $^\dagger$Generically; degenerates at singularities. $^\ddagger$Normalization $q/\norm{q}$ has $\kappa = 1$, but the double cover creates a \emph{topological} discontinuity: the target function $g \circ h_{\mathrm{true}}$ is discontinuous, which harms learning independently of gradient quality~\citep{zhou2019continuity}.}
\end{table*}

\Cref{tab:all_representations} shows that rotation representations fail for two different reasons.  Low-dimensional representations (Euler angles, quaternions) suffer from \emph{topological} obstructions: the mapping $g: \SO(3) \to \R^d$ is necessarily discontinuous for $d \leq 4$~\citep{stuelpnagel1964parameterization,zhou2019continuity}, making the target function discontinuous regardless of gradient quality.  High-dimensional continuous representations ($\geq 5$D) avoid topological obstructions but differ in gradient properties: GS (6D) has asymmetric gradients (\Cref{thm:gs_asymmetry}); SVD-Train (9D) has $\kappa = (s_1+s_2)/(s_2+s_3)$ with rank-3 information loss (\Cref{thm:svd_gradient_bound}, \Cref{prop:info_loss}); direct 9D regression achieves $\kappa = 1$ with full-rank gradients and no coupling.

\paragraph{When are quaternions preferable?}
Quaternion normalization shares $\kappa = 1$ with direct 9D regression (\Cref{tab:all_representations}), and quaternions require only 4 output dimensions versus 9. Their sole theoretical disadvantage is the double cover ($q \equiv -q$), which forces either a non-smooth loss ($\min(\norm{q-q^*}^2, \norm{q+q^*}^2)$) or a hemisphere constraint with its own discontinuity.  However, this discontinuity only matters when the data distribution includes rotations near the 180° boundary.  When the rotation distribution is concentrated (e.g., small perturbations around a reference pose, or a narrow viewpoint range), the topological obstruction falls outside the data support and quaternions can outperform higher-dimensional representations~\citep{geist2024learning,peretroukhin2020smooth}.  \citet{geist2024learning} recommend quaternions with a halfspace map specifically for small-rotation regimes.  The 9D representation is preferable when rotations span a large portion of $\SO(3)$ or the distribution is unknown a priori.

\subsection{Why 9D Over 6D?}

With 6D direct regression, including a third-column loss via $\mathbf{t}_1' \times \mathbf{t}_2'$ reintroduces coupling ($\pderiv{(\mathbf{t}_1' \times \mathbf{t}_2')}{\mathbf{t}_1'} = -[\mathbf{t}_2']_\times \neq 0$).  Dropping it loses supervision on $\mathbf{r}_3^*$.  9D avoids this dilemma: independent supervision for all 9 elements with $\kappa = 1$.

\subsection{Theoretical Justification for 9D + SVD-Inference}

\citet{gu2024prom} empirically found that 9D direct regression with SVD at inference is the best-performing configuration, outperforming both SVD-Train~\citep{levinson2020analysis} and 6D with GS at inference (54.8 vs.\ 55.6 vs.\ 56.7 PA-MPJPE).  Even \citet{levinson2020analysis}'s own experiments show SVD-Inference outperforming SVD-Train on Pascal3D+ (non-uniform viewpoints) and converging faster on ModelNet.  But no theoretical analysis has explained these findings.  Our results do:

\paragraph{Why remove orthogonalization during training?}
\citet{gu2024prom} proved $\psi(B_1) = \infty$ for any non-injective $g$.  \Cref{thm:svd_gradient_bound} goes further for SVD: the distortion is quantifiable, with spectral norm $2/(s_2+s_3)$ and condition number $(s_1+s_2)/(s_2+s_3)$.  Stabilization~\citep{wang2021robust} cannot eliminate it without introducing direction error (\Cref{prop:taylor_comparison}).

\paragraph{Why 9D, not 6D?}
The GS Jacobian is asymmetric (\Cref{thm:gs_asymmetry}): $\mathbf{t}_1'$ parameters get gradient signal from all three rotation columns; $\mathbf{t}_2'$ parameters are blind to column 1 errors.  At inference, GS produces increasing per-column projection error (\Cref{fig:column_error}).  Direct 9D regression has neither problem ($J = \Id_9$, $\kappa = 1$).

\paragraph{Why SVD at inference, not GS?}
SVD is the least-squares optimal projector onto $\SO(3)$, with half the expected error of GS (\Cref{fig:projection_error})~\citep{levinson2020analysis}.  The same global coupling that hurts gradients during training (treating all 9 elements symmetrically) is what makes SVD the better projector: it is coordinate-independent ($\mathrm{SVDO}^+(R_1 M R_2) = R_1 \, \mathrm{SVDO}^+(M) \, R_2$, \Cref{fig:coordinate_dependence}), while GS is equivariant on only one side.  At inference there is no backpropagation, so SVD's gradient pathologies do not apply.

\section{SVD Inference: Error Reduction Guarantee}
\label{sec:inference_bound}

SVD projection at inference can only improve the raw matrix prediction.

\begin{proposition}[SVD projection reduces error]\label{prop:first_order}
Let $M = R^* + \sigma N$ with $R^* \in \SO(3)$, $\frob{N} = 1$, and $\sigma$ small.  Decompose ${R^*}^\top N = A + S$ into antisymmetric ($A \in \mathfrak{so}(3)$, tangent to $\SO(3)$) and symmetric ($S$, normal to $\SO(3)$) parts.  Then:
\begin{equation}
    \frob{\mathrm{SVDO}^+(M) - R^*}^2 = \sigma^2 \frob{A}^2 + O(\sigma^3) = \frob{M - R^*}^2 - \sigma^2 \frob{S}^2 + O(\sigma^3).
\end{equation}
SVD projection removes the normal-space component $S$ of the error, strictly reducing the error whenever $S \neq 0$.
\end{proposition}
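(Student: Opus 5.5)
The plan is to reduce to the identity by equivariance and then read off the first-order behavior of $\mathrm{SVDO}^+$ from the differential already derived in \eqref{eq:phi_full}. Since $\mathrm{SVDO}^+(R_1 M R_2) = R_1\,\mathrm{SVDO}^+(M)\,R_2$ for $R_1,R_2\in\SO(3)$ and the Frobenius norm is orthogonally invariant, I will left-multiply by ${R^*}^\top$. This reduces the claim to $M' = \Id + \sigma E$ with $E = {R^*}^\top N = A + S$ and $\frob{E} = 1$. I then need to show $\mathrm{SVDO}^+(\Id + \sigma E) = \Id + \sigma A + O(\sigma^2)$. For small $\sigma$ we have $\det(M')>0$, and all singular values are $1 + O(\sigma)$, so $\mathrm{SVDO}^+ = UV^\top$ and the map is smooth near $\Id$. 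Indeed, $M\mapsto UV^\top$ is the orthogonal polar factor $M(M^\top M)^{-1/2}$, which is analytic wherever $M$ is invertible. This sidesteps the repeated-singular-value issue: at $\Id$ all $s_i = 1$ coincide, so the explicit $U,V$ are not differentiable, but the product $UV^\top$ is.

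To get the derivative at $\Id$, I would use the polar formula rather than \eqref{eq:phi_full}, because \eqref{eq:phi_full} was derived assuming distinct singular values. Write $M' = Q H$ with $Q\in\SO(3)$ and $H$ symmetric positive definite. To first order, $Q = \Id + \sigma X + O(\sigma^2)$ with $X$ antisymmetric, and $H = \Id + \sigma Y + O(\sigma^2)$ with $Y$ symmetric. Expanding gives $\Id + \sigma E = \Id + \sigma(X+Y) + O(\sigma^2)$. Uniqueness of the antisymmetric/symmetric decomposition then forces $X = A$ and $Y = S$. This agrees with \eqref{eq:phi_full} in the limit $s_i\to 1$: $\Phi_{ij} = (P_{ji}-P_{ij})/2$ is exactly minus the antisymmetric part, up to the sign convention for $\Phi$. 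It is also consistent with the tangent/normal reading in \Cref{prop:info_loss}. Hence $\mathrm{SVDO}^+(M) - R^* = R^*(\sigma A + O(\sigma^2))$, and therefore $\frob{\mathrm{SVDO}^+(M)-R^*}^2 = \sigma^2\frob{A}^2 + O(\sigma^3)$. The cross term $2\sigma\langle A, O(\sigma^2)\rangle$ is $O(\sigma^3)$ because $\frob{A}\le 1$.

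For the second equality, I use $\frob{M - R^*}^2 = \sigma^2\frob{A+S}^2 = \sigma^2(\frob{A}^2 + \frob{S}^2)$. This holds because symmetric and antisymmetric matrices are Frobenius-orthogonal: $\tr(S^\top A) = \tr(SA) = -\tr(SA)$, so $\tr(SA)=0$. Subtracting $\sigma^2\frob{S}^2$ gives the stated identity. Strict reduction for $S\neq 0$ then holds for $\sigma$ small enough that the $O(\sigma^3)$ term is dominated by $\sigma^2\frob{S}^2$. The $O(\cdot)$ constants are uniform because $N$ ranges over the compact unit sphere and the polar map is smooth on a neighborhood of $\SO(3)$.

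The main obstacle is the degeneracy at $M=R^*$, where all singular values coincide. I handle it by working with the polar decomposition, which is smooth there, instead of the SVD backward formula. Uniformity of the remainder follows from compactness of the unit sphere of $N$. Everything else is routine.
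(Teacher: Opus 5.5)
Your proposal is correct and follows the paper's proof. Both reduce by left-equivariance to $\Id + \sigma(A+S)$, read off the orthogonal polar factor as $\Id + \sigma A + O(\sigma^2)$ by matching first-order antisymmetric and symmetric parts, and finish with $\langle A, S\rangle_F = 0$; you also supply the smoothness-at-$\Id$ and uniform-remainder details the paper leaves implicit. One correction to your aside, which your argument does not rely on: the mismatch with \eqref{eq:phi_full} is not a sign convention but a sign error in \eqref{eq:phi_full} itself, since solving \eqref{eq:svd_system} correctly gives $\Phi_{ij} = (P_{ij}-P_{ji})/(s_i+s_j)$, which at $s_i = 1$ is exactly the antisymmetric part $A$, as your polar computation says.
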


\begin{proof}
The polar decomposition of $\Id + \sigma(A + S)$ gives orthogonal factor $\Id + \sigma A + O(\sigma^2)$ and positive-definite factor $\Id + \sigma S + O(\sigma^2)$.  By left-equivariance, $\mathrm{SVDO}^+(M) = R^*(\Id + \sigma A) + O(\sigma^2)$.  The error bound follows from the orthogonality $\langle A, S \rangle_F = 0$ (antisymmetric--symmetric decomposition).
\end{proof}

\begin{corollary}[Factor-of-$3$ error reduction]\label{cor:factor_three}
Under isotropic Gaussian noise ($N_{ij} \sim \mathcal{N}(0,1)$), the expected squared error after SVD projection is $1/3$ of the raw error:
\begin{equation}
    \frac{\mathbb{E}[\frob{\mathrm{SVDO}^+(M) - R^*}^2]}{\mathbb{E}[\frob{M - R^*}^2]} = \frac{3\sigma^2 + O(\sigma^3)}{9\sigma^2} = \frac{1}{3} + O(\sigma).
\end{equation}
This $1/3$ ratio reflects the dimension counting: $3$ of $9$ dimensions are tangential to $\SO(3)$ and survive projection, while $6$ are normal and are removed.  The training MSE loss $\epsilon^2 = \mathbb{E}[\frob{M-R^*}^2]$ therefore serves as a conservative upper bound on inference error: $\mathbb{E}[\frob{\hat{R}-R^*}^2] \leq \epsilon^2$, with typical value $\approx \epsilon^2/3$.
\end{corollary}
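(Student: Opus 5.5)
The plan is to obtain \Cref{cor:factor_three} by taking expectations in \Cref{prop:first_order}. The only subtlety is that \Cref{prop:first_order} is stated for $\frob{N}=1$, whereas here $N$ has i.i.d.\ standard Gaussian entries. I will therefore write $M = R^* + \sigma N$ with $N$ unnormalized and use the first-order expansion directly. The polar-decomposition argument in the proof of \Cref{prop:first_order} is linear in the perturbation, so it does not depend on the normalization. It gives $\mathrm{SVDO}^+(M) - R^* = \sigma R^* A + O(\sigma^2\frob{N}^2)$, where ${R^*}^\top N = A + S$ is the split into antisymmetric and symmetric parts. Hence $\frob{\mathrm{SVDO}^+(M) - R^*}^2 = \sigma^2\frob{A}^2 + O(\sigma^3 \frob{N}^3)$.

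First, the denominator: $\mathbb{E}[\frob{M-R^*}^2] = \sigma^2\,\mathbb{E}\frob{N}^2 = 9\sigma^2$. Next, I compute $\mathbb{E}\frob{A}^2$. Since $R^*$ is orthogonal and the Gaussian $\mathcal{N}(0,\Id_9)$ is invariant under $N \mapsto {R^*}^\top N$, the matrix $G := {R^*}^\top N$ again has i.i.d.\ $\mathcal{N}(0,1)$ entries. Its antisymmetric part is $A = (G - G^\top)/2$, with $A_{ij} = (G_{ij}-G_{ji})/2$ for $i\neq j$. Each such entry has variance $1/2$, and each of the $3$ unordered pairs contributes twice. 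This gives $\mathbb{E}\frob{A}^2 = 3 \cdot 2 \cdot \tfrac12 = 3$. Equivalently, $A$ is the orthogonal projection of an isotropic Gaussian onto the $3$-dimensional subspace $\mathfrak{so}(3)$, so its expected squared norm is the dimension $3$. The numerator is therefore $3\sigma^2$, and the ratio is $1/3$, matching the $3$-versus-$6$ tangent/normal dimension count.

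The main obstacle is controlling the remainder in expectation. The term $O(\sigma^3\frob{N}^3)$ is only pointwise, and the Taylor expansion of the polar factor is valid only when $\sigma\frob{N}$ is small, say $\sigma\frob{N} < 1/2$ so that $M$ stays invertible with $\det M>0$. I would split on this event. On it, a uniform Taylor bound $|\text{rem}| \le C\sigma^3\frob{N}^3$ holds, since the polar map is smooth near $\Id$. Its contribution is then $O(\sigma^3)$ because Gaussian moments are finite. On the complement, both squared errors are bounded by $\max(8, \ldots)$; in fact $\frob{\mathrm{SVDO}^+(M) - R^*}^2 \le 12$ always, since both matrices lie in $\SO(3)$. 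The Gaussian tail probability $P(\frob{N} > 1/(2\sigma))$ decays like $e^{-c/\sigma^2}$, which is $o(\sigma^3)$. Together these give numerator $3\sigma^2 + O(\sigma^3)$, and dividing by $9\sigma^2$ yields $1/3 + O(\sigma)$. A more careful treatment would also note that the $O(\sigma^2)$ correction to the projection has zero mean at order $\sigma^3$ by the symmetry $N\mapsto -N$, but this refinement is not needed for the stated $O(\sigma)$ rate.

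For the final claim, $\mathrm{SVDO}^+(M)$ is the Frobenius-nearest rotation to $M$ by \eqref{eq:svd_optimality}. Pointwise this gives $\frob{\hat R - M} \le \frob{R^* - M}$. To get $\frob{\hat R - R^*} \le \frob{M-R^*}$ I would not use the triangle inequality, which only gives a factor $2$. Instead I would use the expansion above for the typical value, together with \Cref{prop:first_order}, which gives pointwise reduction to first order. Taking expectations with $\epsilon^2 := \mathbb{E}\frob{M-R^*}^2$ then gives $\mathbb{E}\frob{\hat R-R^*}^2 \approx \epsilon^2/3 \le \epsilon^2$ in the small-$\sigma$ regime.
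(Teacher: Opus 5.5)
Your proposal is correct and follows the paper's route: take expectations in \Cref{prop:first_order} and use the dimension count $\mathbb{E}\frob{A}^2 = 3$ (the projection of an isotropic Gaussian onto the $3$-dimensional $\mathfrak{so}(3)$) against $\mathbb{E}\frob{N}^2 = 9$. Beyond what the paper leaves implicit, you reconcile the unit-norm $N$ of \Cref{prop:first_order} with the Gaussian $N$ here, control the remainder via the good event and the Gaussian tail, and rightly restrict $\mathbb{E}\frob{\hat R - R^*}^2 \le \epsilon^2$ to small $\sigma$; this restriction is necessary, because the pointwise inequality $\frob{\hat R - R^*} \le \frob{M - R^*}$ is false in general, since $\SO(3)$ is not convex.
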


In short: MSE training drives $\frob{M - R^*} \to 0$; SVD inference removes the normal-space component (reducing error by $\sim\!3\times$); and the training loss upper-bounds the inference error.

\section{Discussion}
\label{sec:discussion}

\paragraph{Frobenius optimality vs.\ MLE optimality.}
Two distinct notions of SVD optimality should be distinguished.  First, $\mathrm{SVDO}^+$ is the nearest rotation in Frobenius norm (\eqref{eq:svd_optimality}), a geometric fact independent of any noise model.  Second, $\mathrm{SVDO}^+$ is the MLE under isotropic Gaussian noise~\citep{levinson2020analysis}, a statistical claim requiring the noise assumption.  For inference-time projection of a trained network's output, the geometric optimality is what matters: the network's residual error is structured (not isotropic), but SVD still provides the closest rotation in Frobenius norm.

\paragraph{Why not orthogonalize during training at all?}
One might ask: if the network already learns near-rotation matrices, why not apply SVD during training for the ``last mile''? Our analysis shows this is counterproductive. The SVD Jacobian introduces $O(1/\delta)$ gradient amplification (\Cref{thm:svd_gradient_bound}) that is \emph{unnecessary}: the MSE loss on the raw matrix already drives the output toward $\SO(3)$ (since the targets are rotation matrices), and the SVD projection at inference handles any residual deviation optimally (\Cref{cor:factor_three}).

\paragraph{Broader implications of the SVD gradient analysis.}
Our analysis of the $2 \times 2$ system \eqref{eq:svd_system} and its determinant $s_j^2 - s_i^2$ applies beyond rotation estimation. Any differentiable pipeline that backpropagates through SVD faces the same $O(1/\delta)$ gradient amplification.  For example, \citet{choy2020deep} use differentiable SVD in a Weighted Procrustes solver for point cloud registration, where gradients flow through the SVD to update correspondence weights.  In that setting, the covariance matrix fed to SVD is constructed from well-matched point pairs, so singular value gaps are typically large and the pathology is mild.  By contrast, in rotation representation learning the network output can be far from $\SO(3)$ early in training, making the instability practically relevant.

\paragraph{When is GS preferable to SVD at inference?}
We note that \citet{gu2024prom} use Gram-Schmidt rather than SVD at inference for body and hand pose estimation tasks, despite using SVD for rotation estimation tasks.  This may reflect practical considerations such as compatibility with parametric models (SMPL/MANO) or the marginal benefit of SVD when predictions are already near $\SO(3)$.  Our analysis establishes SVD's theoretical superiority as a projector, but the practical gap may be small when the network is well-trained.

\paragraph{Limitations.}
Our main analysis focuses on the Frobenius norm loss.  In \Cref{app:geodesic}, we show that the geodesic loss compounds an additional $O(1/\sin\theta)$ singularity with the SVD pathology, further strengthening the case for direct regression.  Average-case analysis (\Cref{app:expected_condition}) and convergence rate comparisons (\Cref{app:convergence}) are provided in the appendix.

\section{Conclusion}
\label{sec:conclusion}

We have presented a detailed gradient analysis of SVD orthogonalization for rotation estimation, deriving the exact spectrum of the $3 \times 3$ Jacobian: rank $3$ with nonzero singular values $2/(s_i+s_j)$ and condition number $\kappa = (s_1+s_2)/(s_2+s_3)$.  This analysis goes beyond prior work's general non-injectivity arguments by quantifying how the singular value gap controls gradient distortion, and by showing that even state-of-the-art stabilization cannot eliminate this distortion without introducing direction error.

As complementary results, we proved that Gram-Schmidt introduces asymmetric gradient signal across its 6 parameters, while direct 9D regression achieves perfect conditioning with $\kappa = 1$.  Together, these results provide the theoretical foundation for the empirically successful approach of training with direct 9D regression and applying SVD projection only at inference~\citep{gu2024prom}, explaining the mechanism behind what was previously an empirical observation.

\bibliographystyle{plainnat}
\bibliography{references}

\appendix

\section{Proof of \Cref{cor:det_negative}: SVD Jacobian for $\det(M) < 0$}
\label{app:det_negative}

\Cref{thm:svd_gradient_bound} derives the Jacobian spectrum for $\det(M) > 0$, where $\mathrm{SVDO}^+(M) = UV^\top$.
We now analyze the complementary case $\det(M) < 0$, where
\begin{equation}\label{eq:svdo_neg}
    \mathrm{SVDO}^+(M) = U \diag(1,1,-1) V^\top.
\end{equation}
This regime is practically relevant: when the network output $M$ has i.i.d.\ Gaussian entries (as approximately holds at random initialization), $\det(M) < 0$ with probability $1/2$ by symmetry.

\begin{theorem}[SVD Jacobian spectrum, $\det(M) < 0$]\label{thm:svd_neg_det_full}
Let $M \in \R^{3 \times 3}$ with $\det(M) < 0$ and SVD $M = U \Sigma V^\top$ with distinct singular values $s_1 > s_2 > s_3 > 0$.  Let $\Sigma' = \diag(1,1,-1)$ and $R = U\Sigma' V^\top = \mathrm{SVDO}^+(M)$.  The Jacobian $J_{\mathrm{SVD}}^{-} = \pderiv{\mathrm{vec}(R)}{\mathrm{vec}(M)}$ has rank $3$ with a $6$-dimensional null space, and its three nonzero singular values are:
\begin{equation}\label{eq:svd_neg_spectrum}
    \sigma(J_{\mathrm{SVD}}^{-}) = \left\{\frac{2}{s_1+s_2},\; \frac{2}{s_1+s_3},\; \frac{2}{s_2+s_3},\; 0,\; 0,\; 0,\; 0,\; 0,\; 0\right\},
\end{equation}
which are \textbf{identical} to the $\det(M) > 0$ spectrum in \Cref{thm:svd_gradient_bound}.  Consequently, the spectral norm and condition number are:
\begin{equation}\label{eq:svd_neg_bounds}
    \norm{J_{\mathrm{SVD}}^{-}}_2 = \frac{2}{s_2 + s_3}, \qquad
    \kappa(J_{\mathrm{SVD}}^{-}) = \frac{s_1 + s_2}{s_2 + s_3}.
\end{equation}
However, the sign flip changes which input subspace is active: for pairs $(i,3)$ involving the flipped singular value, the \emph{symmetric} off-diagonal component of $P = U^\top \mathrm{d}M\, V$ drives the output, rather than the antisymmetric component as in the $\det(M) > 0$ case.
\end{theorem}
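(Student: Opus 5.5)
The plan is to repeat the proof of \Cref{thm:svd_gradient_bound}, but with the fixed sign matrix $D := \Sigma' = \diag(1,1,-1)$ inserted into the output. Since $\det(M) < 0$ is an open condition, $D$ is locally constant, and the SVD $M = U\Sigma V^\top$ is the same as before. So the differentials $A = U^\top\mathrm{d}U$ and $\Omega = V^\top \mathrm{d}V$ still satisfy the $2\times 2$ system \eqref{eq:svd_system}, with solutions \eqref{eq:omega_full}; only the output map changes. Differentiating $R = U D V^\top$ gives $\mathrm{d}R = U(AD - D\Omega)V^\top =: U\Psi V^\top$. Because $U,V$ are orthogonal, $\frob{\mathrm{d}R} = \frob{\Psi}$ and $\frob{\mathrm{d}M} = \frob{P}$. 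It therefore suffices to compute the spectrum of the linear map $P \mapsto \Psi$, with $P = U^\top \mathrm{d}M\,V$ as before.

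Next I compute $\Psi$ entrywise, using $\Psi_{ij} = A_{ij}d_j - d_i\Omega_{ij}$. On the diagonal, $\Psi_{ii} = 0$ since $A,\Omega$ are antisymmetric. For the pair $(1,2)$ we have $d_1=d_2=1$, so $\Psi_{12} = A_{12}-\Omega_{12} = (P_{21}-P_{12})/(s_1+s_2)$, exactly as in \eqref{eq:phi_full}, and $\Psi_{21} = -\Psi_{12}$. For a pair $(i,3)$ with $i\in\{1,2\}$, we get $\Psi_{i3} = -(A_{i3}+\Omega_{i3})$, and antisymmetry of $A,\Omega$ gives $\Psi_{3i} = -A_{i3}-\Omega_{i3} = \Psi_{i3}$. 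Thus this block of $\Psi$ is \emph{symmetric} rather than antisymmetric. Adding the two formulas in \eqref{eq:omega_full},
\[
A_{i3}+\Omega_{i3} = \frac{(s_3 - s_i)(P_{i3}+P_{3i})}{s_3^2 - s_i^2} = \frac{P_{i3}+P_{3i}}{s_i+s_3}.
\]
The factor $s_3-s_i$ cancels against the difference-of-squares determinant. This cancellation is precisely the ``coordinate artifact'' remark of \Cref{cor:det_negative}: the naive $1/(s_i-s_3)$ obtained by replacing $s_3$ with $-s_3$ never actually appears. Hence $\Psi_{i3} = \Psi_{3i} = -(P_{i3}+P_{3i})/(s_i+s_3)$.

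Then I decompose the $9$-dimensional $P$-space orthogonally. It splits into the three diagonal entries; the antisymmetric combination $(P_{21}-P_{12})/\sqrt2$ and the symmetric combination $(P_{12}+P_{21})/\sqrt2$ for the pair $(1,2)$; and, for each $i\in\{1,2\}$, the symmetric combination $\beta_i = (P_{i3}+P_{3i})/\sqrt2$ and the antisymmetric combination $(P_{3i}-P_{i3})/\sqrt2$. The null space consists of the diagonal entries, the symmetric $(1,2)$ combination, and the two antisymmetric $(i,3)$ combinations. This gives $3+1+2 = 6$ dimensions, so the rank is $3$. A unit antisymmetric $(1,2)$ input gives $\frob{\Psi} = 2/(s_1+s_2)$, as in \Cref{thm:svd_gradient_bound}. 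A unit $\beta_i$ gives $\Psi_{i3} = \Psi_{3i} = -\sqrt2/(s_i+s_3)$, so $\frob{\Psi} = 2/(s_i+s_3)$. The three active input directions are mutually orthogonal, and their images occupy disjoint entry pairs of $\Psi$, so the images are orthogonal as well. The map is therefore diagonal in these bases, and its nonzero singular values are exactly $\{2/(s_1+s_2), 2/(s_1+s_3), 2/(s_2+s_3)\}$. The spectral norm $2/(s_2+s_3)$ and condition number $(s_1+s_2)/(s_2+s_3)$ in \eqref{eq:svd_neg_bounds} follow from the ordering $s_1>s_2>s_3$. The final structural claim, that symmetric rather than antisymmetric components drive the pairs involving index $3$, is read off directly from the formula for $\Psi_{i3}$.

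The main obstacle is the sign bookkeeping in the $(i,3)$ block. One must track how $D$ multiplies $A$ on the right and $\Omega$ on the left, and confirm that the resulting combination is $A+\Omega$ rather than $A-\Omega$. Only then does the determinant factor $s_3^2-s_i^2$ cancel to leave $s_i+s_3$ instead of $s_i-s_3$. I would double-check this step by verifying $\Psi_{3i}=\Psi_{i3}$ independently. I would also check consistency with the tangent space at $R$: $\Psi$ must satisfy $\Psi D$ antisymmetric, and indeed $(\Psi D)_{i3} = -\Psi_{i3}$ while $(\Psi D)_{3i} = \Psi_{3i}$.
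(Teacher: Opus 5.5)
Your route is the paper's own (same $\Psi = A\Sigma' - \Sigma'\Omega$, same pairwise bookkeeping, same null space), and it fails at the same step: the evaluation of $A_{i3}+\Omega_{i3}$. You import $A_{ij},\Omega_{ij}$ from \eqref{eq:omega_full}, but those expressions do not solve \eqref{eq:svd_system}. Substituting them into the first row gives $\bigl((s_i^2+s_j^2)P_{ij}-2s_is_jP_{ji}\bigr)/(s_j^2-s_i^2)\neq P_{ij}$. Inverting the $2\times 2$ matrix correctly gives
\[
A_{ij}=\frac{s_jP_{ij}+s_iP_{ji}}{s_j^2-s_i^2},\qquad \Omega_{ij}=\frac{s_iP_{ij}+s_jP_{ji}}{s_j^2-s_i^2}.
\]
With these, $A_{ij}-\Omega_{ij}=(P_{ij}-P_{ji})/(s_i+s_j)$, so \Cref{thm:svd_gradient_bound} and your $(1,2)$ entry survive up to a harmless sign. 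However, $A_{i3}+\Omega_{i3}=(s_i+s_3)(P_{i3}+P_{3i})/(s_3^2-s_i^2)$, hence $\Psi_{i3}=\Psi_{3i}=(P_{i3}+P_{3i})/(s_i-s_3)$. The difference-of-squares cancellation leaves $s_i-s_3$, not $s_i+s_3$. Your structural conclusions are correct: symmetric $(i,3)$ blocks, a $6$-dimensional null space, and rank $3$. But the nonzero singular values are $2/(s_1+s_2)$, $2/(s_1-s_3)$, $2/(s_2-s_3)$. So $\norm{J^-_{\mathrm{SVD}}}_2=2/(s_2-s_3)$ and $\kappa=(s_1+s_2)/(s_2-s_3)$, consistent with the paper's own definition of $\delta(M)$ for $\det(M)<0$ and the remark after \eqref{eq:svd_grad}. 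The spectrum is not sign-invariant, so the statement as written does not hold, and no argument along these lines can establish it.

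You can confirm this directly. Take $M=\diag(s_1,s_2,-s_3)$, so $U=\diag(1,1,-1)$, $V=\Id$, $R=\Id$. Perturb by $\epsilon(\mathbf{e}_2\mathbf{e}_3^\top-\mathbf{e}_3\mathbf{e}_2^\top)$, which is the symmetric $(2,3)$ combination in $P$-coordinates. For the rotation $R_\theta$ by angle $\theta$ in the $(2,3)$-plane, $\tr(R_\theta^\top M)=s_1+(s_2-s_3)\cos\theta+2\epsilon\sin\theta$, which is maximized at $\theta\approx 2\epsilon/(s_2-s_3)$. The stationarity condition that $R^\top M$ be symmetric shows no other direction moves to first order, so the gain is $2/(s_2-s_3)$. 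Equivalently, $R^\top M=V\Sigma'\Sigma V^\top$ is symmetric with eigenvalues $(s_1,s_2,-s_3)$, and differentiating $M=R\,(R^\top M)$ puts $\lambda_i+\lambda_j$ in the denominators.

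The blow-up as $s_2\to s_3$ is genuine, not a coordinate artifact as \Cref{rem:backward_reconcile} claims. When $\det(M)<0$ and $s_2=s_3$, every $R=U\diag(1,Q')V^\top$ with $Q'$ a $2\times 2$ reflection attains $\max_{R\in\SO(3)}\tr(R^\top M)$, so $\mathrm{SVDO}^+$ is discontinuous there.

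The sanity checks you planned, $\Psi_{3i}=\Psi_{i3}$ and antisymmetry of $\Psi\Sigma'$, only test sign structure and cannot detect a wrong magnitude. Two checks would have exposed the error: back-substituting into \eqref{eq:svd_system}, or deriving the $(1,2)$ entry from \eqref{eq:omega_full} instead of quoting \eqref{eq:phi_full}. The latter yields $(P_{12}-P_{21})/(s_2-s_1)$.
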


\begin{proof}
The differential of $R = U \Sigma' V^\top$ is
\begin{equation}
    \mathrm{d}R = \mathrm{d}U \, \Sigma' V^\top + U \Sigma' \, \mathrm{d}V^\top = U(A\Sigma' - \Sigma'\Omega)V^\top,
\end{equation}
where $A = U^\top \mathrm{d}U$ and $\Omega = V^\top \mathrm{d}V$ are antisymmetric.  Define $\Psi = A\Sigma' - \Sigma'\Omega$, so that $\mathrm{d}R = U\Psi V^\top$ and $\frob{\mathrm{d}R} = \frob{\Psi}$.  The off-diagonal entries are:
\begin{equation}\label{eq:psi_offdiag}
    \Psi_{ij} = A_{ij}\Sigma'_{jj} - \Sigma'_{ii}\Omega_{ij}, \qquad i \neq j.
\end{equation}
Substituting the solutions \eqref{eq:omega_full} for $A_{ij}$ and $\Omega_{ij}$ from the system \eqref{eq:svd_system}:
\begin{equation}\label{eq:psi_general}
    \Psi_{ij} = \frac{(\Sigma'_{jj}\, s_j + \Sigma'_{ii}\, s_i)\,P_{ij} \;-\; (\Sigma'_{jj}\, s_i + \Sigma'_{ii}\, s_j)\,P_{ji}}{s_j^2 - s_i^2},
\end{equation}
where $P = U^\top \mathrm{d}M\, V$.  We analyze each pair $(i,j)$ with $i < j$ according to the sign product $c_{ij} = \Sigma'_{ii}\Sigma'_{jj}$.

\medskip
\noindent\textbf{Pair $(1,2)$:} $\Sigma'_{11} = \Sigma'_{22} = 1$, so $c_{12} = +1$.  Then \eqref{eq:psi_general} gives
\begin{equation}
    \Psi_{12} = \frac{(s_2 + s_1)P_{12} - (s_1 + s_2)P_{21}}{s_2^2 - s_1^2} = \frac{P_{21} - P_{12}}{s_1 + s_2},
\end{equation}
and $\Psi_{21} = -\Psi_{12}$ by the antisymmetry of $\Psi$ (which follows from $R^\top R = \Id$).  This depends only on the antisymmetric combination $\alpha_{12} = (P_{21} - P_{12})/\sqrt{2}$, with gain $\sqrt{2\cdot\Psi_{12}^2 + 2\cdot\Psi_{21}^2}/\sqrt{2\alpha_{12}^2} = 2/(s_1+s_2)$---identical to the $\det(M) > 0$ case.

\medskip
\noindent\textbf{Pair $(1,3)$:} $\Sigma'_{11} = 1$, $\Sigma'_{33} = -1$, so $c_{13} = -1$.  Then
\begin{align}
    \Psi_{13} &= \frac{(-s_3 + s_1)P_{13} - (-s_1 + s_3)P_{31}}{s_3^2 - s_1^2} = \frac{(s_1 - s_3)(P_{13} + P_{31})}{-(s_1 - s_3)(s_1 + s_3)} = \frac{-(P_{13} + P_{31})}{s_1 + s_3}.
\end{align}
This depends on the \emph{symmetric} combination $\beta_{13} = (P_{13} + P_{31})/\sqrt{2}$, with gain $2/(s_1+s_3)$.

\medskip
\noindent\textbf{Pair $(2,3)$:} $\Sigma'_{22} = 1$, $\Sigma'_{33} = -1$, so $c_{23} = -1$.  By the same calculation:
\begin{equation}
    \Psi_{23} = \frac{-(P_{23} + P_{32})}{s_2 + s_3},
\end{equation}
with gain $2/(s_2 + s_3)$ on the symmetric combination $\beta_{23}$.

\medskip
\noindent\textbf{Null space structure.}  The $9$ entries of $P$ decompose into orthogonal subspaces:
\begin{enumerate}
    \item \textbf{Diagonal} $P_{ii}$ ($3$ dimensions): $\Psi_{ii} = 0$.  Null space.
    \item \textbf{Pair $(1,2)$, symmetric component} $(P_{12}+P_{21})/\sqrt{2}$: maps to $0$.  Null space.
    \item \textbf{Pair $(1,2)$, antisymmetric component} $(P_{21}-P_{12})/\sqrt{2}$: maps with gain $2/(s_1+s_2)$.
    \item \textbf{Pair $(1,3)$, antisymmetric component} $(P_{31}-P_{13})/\sqrt{2}$: maps to $0$.  Null space.
    \item \textbf{Pair $(1,3)$, symmetric component} $(P_{13}+P_{31})/\sqrt{2}$: maps with gain $2/(s_1+s_3)$.
    \item \textbf{Pair $(2,3)$, antisymmetric component} $(P_{32}-P_{23})/\sqrt{2}$: maps to $0$.  Null space.
    \item \textbf{Pair $(2,3)$, symmetric component} $(P_{23}+P_{32})/\sqrt{2}$: maps with gain $2/(s_2+s_3)$.
\end{enumerate}
The null space has dimension $3 + 3 = 6$, confirming rank $3$.  The three active subspaces are orthogonal and map to orthogonal outputs (distinct off-diagonal pairs of the antisymmetric matrix $\Psi$), so the three nonzero singular values of $J_{\mathrm{SVD}}^-$ are $\{2/(s_1+s_2),\, 2/(s_1+s_3),\, 2/(s_2+s_3)\}$.
\end{proof}

\begin{remark}[The spectrum is sign-invariant]\label{rem:sign_invariance}
The sign flip $\Sigma' = \diag(1,1,-1)$ changes \emph{which} subspace of perturbations drives the rotation differential---symmetric off-diagonal for pairs with opposite signs, antisymmetric for pairs with equal signs---but does not change the denominators $s_i + s_j$.  This is because the denominators arise from the $2\times 2$ system \eqref{eq:svd_system}, whose determinant $s_j^2 - s_i^2$ depends only on the magnitudes of the singular values, not on $\Sigma'$.  The Jacobian spectrum of $\mathrm{SVDO}^+$ is therefore completely determined by $s_1, s_2, s_3$ regardless of the sign of $\det(M)$.
\end{remark}

\begin{remark}[Reconciling with the backward pass formula]\label{rem:backward_reconcile}
The backward pass formula \eqref{eq:svd_grad} for the $\det(M) < 0$ case is sometimes written with effective denominators $s_i - s_3$ for pairs involving the third singular value, obtained by ``replacing $s_3$ with $-s_3$'':
\begin{equation}\label{eq:svd_grad_neg_form}
    \tilde{Z}_{i3} = \frac{-\tilde{X}_{i3}}{s_i - s_3}, \qquad i \in \{1,2\},
\end{equation}
where $\tilde{X}$ is the loss gradient matrix rotated into the SVD frame with $\Sigma'$ absorbed.  This appears to diverge as $s_2 \to s_3$, but there is no contradiction with \Cref{thm:svd_neg_det_full}: the matrix $\tilde{X}$ differs from $X$ by a factor of $\Sigma'$, and the component $\tilde{X}_{i3}$ vanishes proportionally to $s_i - s_3$ when $s_2 \to s_3$, so the product $\tilde{Z}_{i3}$ remains bounded.  The Jacobian singular values---which are basis-independent---confirm that no additional divergence arises from $\det(M) < 0$.

More precisely, \eqref{eq:svd_grad_neg_form} is a coordinate representation of the linear map in a basis where $\Sigma'$ has been absorbed into $\tilde{X}$.  The apparent $1/(s_i - s_3)$ singularity is an artifact of this particular basis choice, not a property of the underlying linear map.  In the natural basis where the Jacobian acts as $P \mapsto \Psi$, the denominators are $s_i + s_3$ as shown in the proof above.
\end{remark}

\begin{remark}[Implications for training stability]
Since the Jacobian spectrum is identical for both signs of $\det(M)$, the gradient pathology during training is fully characterized by \Cref{thm:svd_gradient_bound}: spectral norm $2/(s_2+s_3)$ and condition number $(s_1+s_2)/(s_2+s_3)$, with no additional instability from the $\det(M) < 0$ regime.  The sign of $\det(M)$ does affect which perturbation directions are amplified (symmetric vs.\ antisymmetric off-diagonal), which may cause discrete jumps in the gradient direction when $\det(M)$ crosses zero during training.  However, this is a direction change, not a magnitude change: the gradient norm bounds from \Cref{thm:svd_gradient_bound} apply uniformly.
\end{remark}

\section{Expected Condition Number Under Gaussian Noise}
\label{app:expected_condition}

\Cref{thm:svd_gradient_bound} shows that the SVD Jacobian condition number is $\kappa = (s_1 + s_2)/(s_2 + s_3)$, but does not address a natural follow-up question: \emph{how large is $\kappa$ in expectation} when a network's prediction is a noisy perturbation of the target rotation?  We now answer this using random matrix theory, providing a closed-form approximation and numerical verification.

\subsection{Setup and Singular Value Asymptotics}

We model the network output as
\begin{equation}\label{eq:noise_model}
    M = R + \sigma N, \qquad R \in \SO(3), \quad N_{ij} \overset{\mathrm{i.i.d.}}{\sim} \mathcal{N}(0,1),
\end{equation}
where $\sigma > 0$ controls the noise magnitude.  Since $\kappa$ is invariant under left and right multiplication by orthogonal matrices (the singular values of $M$ are invariant under such transformations), we may take $R = \Id$ without loss of generality, giving $M = \Id + \sigma N$.

The singular values of $M$ are the square roots of the eigenvalues of
\begin{equation}
    M^\top M = \Id + \sigma(N^\top + N) + \sigma^2 N^\top N.
\end{equation}
For small $\sigma$, the dominant perturbation is the symmetric matrix $\sigma(N^\top + N)/2 = \sigma W$, where $W = (N + N^\top)/2$ is a $3 \times 3$ Wigner matrix (GOE, up to scaling) with independent entries: $W_{ii} \sim \mathcal{N}(0,1)$ on the diagonal and $W_{ij} \sim \mathcal{N}(0, 1/2)$ for $i < j$.

\begin{proposition}[Expected condition number of the SVD Jacobian]\label{prop:expected_kappa}
Let $M = \Id + \sigma N$ with $N$ having i.i.d.\ $\mathcal{N}(0,1)$ entries, and let $s_1 \geq s_2 \geq s_3$ be the singular values of $M$.  Assume $\det(M) > 0$ (which holds with high probability for small $\sigma$).  Define
\begin{equation}
    c_3 \;:=\; \frac{3}{2}\sqrt{\frac{3}{\pi}} \;\approx\; 1.466,
\end{equation}
the expected largest eigenvalue of the $3 \times 3$ GOE with our scaling.  Then, to leading order in $\sigma$:
\begin{equation}\label{eq:expected_kappa}
    \mathbb{E}[\kappa] \;\approx\; \frac{2 + \sigma c_3}{2 - \sigma c_3},
\end{equation}
which satisfies $\mathbb{E}[\kappa] = 1 + \sigma c_3 + O(\sigma^2)$ for small $\sigma$, and diverges as $\sigma \to 2/c_3 = \frac{4}{3}\sqrt{\pi/3} \approx 1.364$.
\end{proposition}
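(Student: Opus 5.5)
The plan is a first-order perturbation argument followed by a plug-in (mean-field) approximation.

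\textbf{Step 1 (singular values to first order).} Write $M^\top M = \Id + 2\sigma W + \sigma^2 N^\top N$ with $W = (N+N^\top)/2$. Standard eigenvalue perturbation gives eigenvalues $1 + 2\sigma\lambda_k(W) + O(\sigma^2)$, where $\lambda_1 \geq \lambda_2 \geq \lambda_3$ are the eigenvalues of $W$. Taking square roots,
\begin{equation}
s_k = 1 + \sigma\lambda_k + O(\sigma^2).
\end{equation}
For small $\sigma$ we have $\det(M) > 0$, so \Cref{thm:svd_gradient_bound} applies. It gives
\begin{equation}
\kappa = \frac{s_1+s_2}{s_2+s_3} = \frac{2 + \sigma(\lambda_1+\lambda_2)}{2 + \sigma(\lambda_2+\lambda_3)} + O(\sigma^2).
\end{equation}

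\textbf{Step 2 (rewrite via extreme eigenvalues).} The trace of $W$ is $\tr N$. Its expectation is zero, so it does not contribute to $\mathbb{E}[\kappa]$ at first order. Under the leading-order approximation I will replace $\lambda_1+\lambda_2+\lambda_3$ by its mean $0$. Then
\begin{equation}
\lambda_1+\lambda_2 \approx -\lambda_3, \qquad \lambda_2+\lambda_3 \approx -\lambda_1.
\end{equation}
The GOE law is symmetric under $W\mapsto -W$, so $-\lambda_3$ has the same distribution as $\lambda_1$. Setting $\mathbb{E}[\lambda_1] = \mathbb{E}[-\lambda_3] = c_3$, the plug-in substitution of expectations into numerator and denominator gives the stated $(2+\sigma c_3)/(2-\sigma c_3)$.

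\textbf{Step 3 (the first-order claim).} Expanding, $\kappa = 1 + \tfrac{\sigma}{2}(\lambda_1 - \lambda_3) + O(\sigma^2)$. Taking expectations gives
\begin{equation}
\mathbb{E}[\kappa] = 1 + \tfrac{\sigma}{2}\bigl(\mathbb{E}[\lambda_1] - \mathbb{E}[\lambda_3]\bigr) + O(\sigma^2) = 1 + \sigma c_3 + O(\sigma^2).
\end{equation}
This holds exactly and does not need the trace-dropping step. The formula matches the closed form to first order. The divergence point $\sigma = 2/c_3$ is read off directly from the denominator. I will state plainly that beyond first order the formula is a heuristic closure ($\mathbb{E}[X/Y] \approx \mathbb{E}X/\mathbb{E}Y$), and that the $O(\sigma^2)$ terms from $N^\top N$ are dropped.

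\textbf{Step 4 (the constant $c_3$).} Compute the expected largest eigenvalue of the $3\times 3$ GOE with diagonal variance $1$ and off-diagonal variance $1/2$. Two routes are available:
\begin{itemize}
\item Integrate $\lambda_{\max}$ against the joint eigenvalue density $\propto \prod_{i<j}|\lambda_i-\lambda_j|\, e^{-\sum_k \lambda_k^2/2}$.
\item Use the known value $\mathbb{E}[\lambda_{\max}] = \tfrac{3}{2}\sqrt{3/\pi}$ for this scaling, with a numerical check.
\end{itemize}
This constant computation, together with the need to justify the ratio-of-expectations step, is the main obstacle. I would present the constant via the joint density (or cite it) and back the full approximation with Monte Carlo verification, as the appendix setup suggests.
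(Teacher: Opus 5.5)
Your proposal is correct and is essentially the paper's argument: first-order perturbation $s_k = 1+\sigma\lambda_k(W)+O(\sigma^2)$, then plugging the GOE means $\mathbb{E}\lambda_1=-\mathbb{E}\lambda_3=c_3$, $\mathbb{E}\lambda_2=0$ into $\kappa=(s_1+s_2)/(s_2+s_3)$, with $c_3$ cited or checked numerically (your trace-replacement detour is harmless but unnecessary, since linearity already gives $\mathbb{E}[\lambda_1+\lambda_2] = -\mathbb{E}[\lambda_2+\lambda_3] = c_3$). Your direct linearization $\kappa = 1+\tfrac{\sigma}{2}(\lambda_1-\lambda_3)+O(\sigma^2)$ justifies the first-order claim slightly more cleanly than the paper's Taylor expansion of the plug-in ratio; also, the constant you flag as the obstacle is a short exact integral: writing the traceless eigenvalues as $\sqrt{2/3}\,r\cos(\phi-2\pi(k-1)/3)$ with $\phi\in[0,\pi/3]$, the density w.r.t.\ $\mathrm{d}r\,\mathrm{d}\phi$ factorizes as $r^4e^{-r^2/2}\sin 3\phi$, so $\mathbb{E}\lambda_1=\sqrt{2/3}\,\mathbb{E}[r]\,\mathbb{E}[\cos\phi]=\sqrt{2/3}\cdot\frac{8\sqrt{2}}{3\sqrt{\pi}}\cdot\frac{27}{32}=\frac{3}{2}\sqrt{3/\pi}$.
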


\begin{proof}
We proceed in three steps: (i) reduce to the eigenvalue problem for the symmetric part of $N$, (ii) compute the expected ordered eigenvalues exactly, and (iii) substitute into the condition number formula.

\paragraph{Step 1: First-order perturbation of singular values.}
Write $M = \Id + \sigma N$.  Since $\Id$ has all singular values equal to $1$ (a maximally degenerate case), standard matrix perturbation theory gives that, to first order in $\sigma$, the singular values of $M$ are determined by the eigenvalues of the symmetric part of the perturbation.  Specifically, let $N = W + A$ where $W = (N + N^\top)/2$ is symmetric and $A = (N - N^\top)/2$ is antisymmetric.  The antisymmetric part $A$ affects singular values only at second order, since
\begin{equation}
    M^\top M = \Id + 2\sigma W + \sigma^2(W^2 + A^2) + O(\sigma^2),
\end{equation}
and taking square roots, $s_k = \sqrt{1 + 2\sigma\lambda_k(W) + O(\sigma^2)} = 1 + \sigma\lambda_k(W) + O(\sigma^2)$.
Therefore:
\begin{equation}\label{eq:sv_approx}
    s_k \;\approx\; 1 + \sigma \lambda_k(W), \qquad k = 1, 2, 3,
\end{equation}
where $\lambda_1(W) \geq \lambda_2(W) \geq \lambda_3(W)$ are the ordered eigenvalues of $W$.

\paragraph{Step 2: Expected eigenvalues of the $3 \times 3$ GOE.}
The matrix $W = (N + N^\top)/2$ has joint eigenvalue density (for ordered $\lambda_1 \geq \lambda_2 \geq \lambda_3$):
\begin{equation}
    p(\lambda_1, \lambda_2, \lambda_3) \;=\; \frac{1}{Z}\,(\lambda_1 - \lambda_2)(\lambda_1 - \lambda_3)(\lambda_2 - \lambda_3) \cdot \exp\!\Bigl(-\frac{1}{2}\sum_k \lambda_k^2\Bigr),
\end{equation}
where $Z$ is a normalization constant.  By symmetry, $\mathbb{E}[\tr W] = 0$ implies $\mathbb{E}[\lambda_1] + \mathbb{E}[\lambda_2] + \mathbb{E}[\lambda_3] = 0$.  The distribution is also symmetric under $\lambda_k \to -\lambda_k$ (with reversal of ordering), giving $\mathbb{E}[\lambda_1] = -\mathbb{E}[\lambda_3]$ and $\mathbb{E}[\lambda_2] = 0$.

The expected maximum eigenvalue can be computed by integrating against the marginal density.  For $n = 3$, this integral evaluates to an exact closed form:
\begin{equation}\label{eq:expected_eig}
    \mathbb{E}[\lambda_1(W)] = \frac{3}{2}\sqrt{\frac{3}{\pi}} = c_3 \approx 1.466, \qquad \mathbb{E}[\lambda_2(W)] = 0, \qquad \mathbb{E}[\lambda_3(W)] = -c_3.
\end{equation}
This can be verified numerically: sampling $10^6$ instances of $3 \times 3$ matrices $W = (N + N^\top)/2$ yields $\mathbb{E}[\lambda_1] \approx 1.466$, confirming~\eqref{eq:expected_eig}.

\paragraph{Step 3: Condition number.}
Substituting~\eqref{eq:sv_approx} and~\eqref{eq:expected_eig} into the condition number formula $\kappa = (s_1 + s_2)/(s_2 + s_3)$ from \Cref{thm:svd_gradient_bound}:
\begin{align}
    s_1 + s_2 &\;\approx\; (1 + \sigma c_3) + 1 = 2 + \sigma c_3, \label{eq:num}\\
    s_2 + s_3 &\;\approx\; 1 + (1 - \sigma c_3) = 2 - \sigma c_3. \label{eq:den}
\end{align}
Therefore:
\begin{equation}\label{eq:kappa_formula}
    \kappa \;\approx\; \frac{2 + \sigma c_3}{2 - \sigma c_3}.
\end{equation}
The approximation $\mathbb{E}[\kappa] \approx \kappa(\mathbb{E}[s_1], \mathbb{E}[s_2], \mathbb{E}[s_3])$ is valid to leading order in $\sigma$, since $\kappa$ is a smooth function of the singular values and their fluctuations are $O(\sigma)$.

\paragraph{Asymptotic behavior.}
For small $\sigma$, a Taylor expansion of~\eqref{eq:kappa_formula} gives:
\begin{equation}\label{eq:kappa_small_sigma}
    \kappa \;\approx\; 1 + \sigma c_3 + \frac{\sigma^2 c_3^2}{2} + O(\sigma^3).
\end{equation}
As $\sigma$ increases, the denominator $2 - \sigma c_3$ approaches zero, and $\kappa$ diverges at $\sigma^* = 2/c_3 = \frac{4}{3}\sqrt{\pi/3} \approx 1.364$.  Physically, this corresponds to $\mathbb{E}[s_3] \to 0$, i.e., the noisy matrix $M$ becomes rank-deficient in expectation, at which point the SVD projection becomes ill-defined.
\end{proof}

\begin{remark}[Comparison with direct regression]
Under the same noise model, the direct regression Jacobian is $J_{\mathrm{id}} = \Id_9$ with $\kappa = 1$ for \emph{all} $\sigma$ (\Cref{thm:identity_jacobian}).  The ratio of condition numbers,
\begin{equation}
    \frac{\kappa_{\mathrm{SVD}}}{\kappa_{\mathrm{direct}}} = \frac{2 + \sigma c_3}{2 - \sigma c_3},
\end{equation}
quantifies the conditioning penalty paid by SVD-Train.  Even at moderate noise levels ($\sigma = 0.3$, typical of early-to-mid training), this gives $\kappa \approx 1.57$, meaning the SVD Jacobian's largest singular value is $1.57\times$ its smallest---a nontrivial anisotropy that interacts poorly with isotropic optimizers.
\end{remark}

\begin{remark}[Validity of the first-order approximation]\label{rem:validity}
The approximation~\eqref{eq:sv_approx} is accurate when $\sigma \ll 1$, where the second-order term $\sigma^2 N^\top N$ is negligible compared to $2\sigma W$.  As $\sigma$ grows toward $1$, the second-order corrections---which tend to push all singular values upward (since $N^\top N$ is positive semidefinite with expected trace $3$)---become relevant.  This makes the actual condition number grow more slowly than the first-order formula predicts, because the upward shift in all singular values partially compensates the spread.  Nevertheless, the formula~\eqref{eq:expected_kappa} captures the qualitative monotonic growth and is quantitatively accurate for $\sigma \lesssim 0.3$ (relative error $< 1\%$), as confirmed below.
\end{remark}

\subsection{Numerical Verification}

To validate \Cref{prop:expected_kappa}, we sample $M = \Id + \sigma N$ with $50{,}000$ samples per noise level, compute the SVD of each sample, and evaluate the empirical mean of $\kappa = (s_1 + s_2)/(s_2 + s_3)$, restricting to samples with $\det(M) > 0$.

\begin{center}
\begin{tabular}{lccccccc}
\toprule
$\sigma$ & 0.05 & 0.1 & 0.2 & 0.3 & 0.5 & 0.7 & 1.0 \\
\midrule
Formula~\eqref{eq:expected_kappa} & 1.076 & 1.158 & 1.344 & 1.564 & 2.157 & 3.107 & 6.487 \\
Empirical $\mathbb{E}[\kappa]$ & 1.076 & 1.159 & 1.348 & 1.572 & 1.947 & 2.160 & 2.320 \\
Relative error & ${<}0.1\%$ & ${<}0.1\%$ & $0.3\%$ & $0.5\%$ & $10.8\%$ & $43.8\%$ & --- \\
\bottomrule
\end{tabular}
\end{center}

\noindent The agreement is excellent for $\sigma \leq 0.3$ (relative error $< 1\%$).  For larger $\sigma$, the first-order approximation overestimates $\kappa$ because the $O(\sigma^2)$ correction from $N^\top N$ shifts all singular values upward, keeping $s_3$ further from zero than the linear prediction suggests (see \Cref{rem:validity}).  Additionally, conditioning on $\det(M) > 0$ preferentially excludes samples with very small $s_3$, further reducing $\mathbb{E}[\kappa]$.  Despite the quantitative discrepancy at large $\sigma$, the formula correctly captures the key qualitative features.

This analysis confirms two key points:
\begin{enumerate}
    \item The SVD Jacobian's condition number grows monotonically with noise level, starting at $\kappa = 1$ (perfect conditioning) when $\sigma = 0$ and degrading as the network output deviates from $\SO(3)$.  Even at moderate noise ($\sigma = 0.2$--$0.3$), the condition number reaches $1.35$--$1.57$, creating measurable gradient anisotropy.
    \item The leading-order formula $\kappa \approx 1 + \sigma c_3$ provides a simple rule of thumb: \emph{the conditioning penalty grows linearly with the noise level}, at a rate of approximately $1.47$ per unit $\sigma$.  During early training, when $\sigma$ is effectively large, the conditioning can be substantially worse than the small-$\sigma$ prediction.
\end{enumerate}
In contrast, direct 9D regression maintains $\kappa = 1$ regardless of $\sigma$, providing another quantitative argument for the ``train without orthogonalization, project at inference'' paradigm (\Cref{sec:synthesis}).

\section{Convergence Rate Comparison}
\label{app:convergence}

The spectral analysis in \Cref{thm:svd_gradient_bound} characterizes the SVD Jacobian pointwise. We now translate this into a convergence rate comparison for gradient descent, making precise the cost of routing gradients through SVD orthogonalization.

\begin{proposition}[Convergence rates for gradient descent]\label{prop:convergence_rate}
Let $R^* \in \SO(3)$ be a fixed target rotation and consider gradient descent on the Frobenius loss with step size $\eta > 0$.

\textbf{(A) Direct 9D regression.}
The loss $\mathcal{L}_{\mathrm{dir}}(M) = \frob{M - R^*}^2$ has gradient $\nabla_M \mathcal{L}_{\mathrm{dir}} = 2(M - R^*)$.  The gradient descent update
\begin{equation}\label{eq:gd_direct}
    M_{t+1} = M_t - \eta \nabla_M \mathcal{L}_{\mathrm{dir}} = (1 - 2\eta) M_t + 2\eta R^*
\end{equation}
converges linearly for any $0 < \eta < 1$:
\begin{equation}\label{eq:direct_convergence}
    \mathcal{L}_{\mathrm{dir}}(M_t) = (1 - 2\eta)^{2t}\, \mathcal{L}_{\mathrm{dir}}(M_0).
\end{equation}
The convergence rate $\rho_{\mathrm{dir}} = |1 - 2\eta|$ is independent of $M$ and achieves one-step convergence at $\eta = 1/2$.

\textbf{(B) SVD-Train.}
The loss $\mathcal{L}_{\mathrm{SVD}}(M) = \frob{\mathrm{SVDO}^+(M) - R^*}^2$ has gradient
\begin{equation}
    \nabla_M \mathcal{L}_{\mathrm{SVD}} = J_{\mathrm{SVD}}^\top\, \nabla_R \mathcal{L}_{\mathrm{SVD}} = 2\, J_{\mathrm{SVD}}^\top\, (R - R^*),
\end{equation}
where $R = \mathrm{SVDO}^+(M)$ and $J_{\mathrm{SVD}} = \pderiv{\mathrm{vec}(R)}{\mathrm{vec}(M)}$.
The effective Hessian of $\mathcal{L}_{\mathrm{SVD}}$ with respect to $M$, at a point where $R = R^*$ (i.e., near convergence), is $H = 2\,J_{\mathrm{SVD}}^\top J_{\mathrm{SVD}}$.  This matrix has eigenvalues
\begin{equation}\label{eq:svd_hessian_eigenvalues}
    \lambda_{ij} = \frac{4}{(s_i + s_j)^2}, \quad i < j, \qquad \lambda = 0 \;\text{(multiplicity 6)},
\end{equation}
where $s_1 \geq s_2 \geq s_3 > 0$ are the singular values of $M$.  Along the column space of $J_{\mathrm{SVD}}$, the per-step contraction factor for the component corresponding to the pair $(i,j)$ is
\begin{equation}\label{eq:svd_contraction}
    \rho_{ij} = \abs{1 - \eta \cdot \frac{4}{(s_i + s_j)^2}}.
\end{equation}
The worst-case (slowest) convergence rate is governed by the smallest nonzero eigenvalue of $H$:
\begin{equation}\label{eq:svd_worst_rate}
    \rho_{\mathrm{SVD}}^{\mathrm{worst}} = \abs{1 - \eta \cdot \frac{4}{(s_1 + s_2)^2}},
\end{equation}
and convergence in all directions requires $0 < \eta < (s_2 + s_3)^2 / 2$ to prevent overshooting the fastest direction.
\end{proposition}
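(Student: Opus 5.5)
Part (A) is a direct computation. I will substitute $\nabla_M \mathcal{L}_{\mathrm{dir}} = 2(M-R^*)$ into the update and subtract $R^*$ from both sides. This gives the error recursion $M_{t+1}-R^* = (1-2\eta)(M_t - R^*)$. Unrolling it yields $M_t - R^* = (1-2\eta)^t (M_0-R^*)$, and squaring the Frobenius norm gives \eqref{eq:direct_convergence}. The rate $|1-2\eta|$ is strictly less than $1$ exactly when $0<\eta<1$. It vanishes at $\eta=1/2$, and it involves no quantity depending on $M$, because the Jacobian is $\Id_9$ (\Cref{thm:identity_jacobian}).

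For part (B), I will first use the chain rule to get $\nabla_M \mathcal{L}_{\mathrm{SVD}} = J_{\mathrm{SVD}}^\top \nabla_R \mathcal{L}$, with $\nabla_R \mathcal{L} = 2(R-R^*)$. For the Hessian I will use the Gauss--Newton decomposition
\[
H = 2J^\top J + \sum_k 2(R-R^*)_k \nabla^2_M R_k .
\]
The second term vanishes at a point with $R=R^*$, so the effective Hessian there is $2J_{\mathrm{SVD}}^\top J_{\mathrm{SVD}}$.

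The eigenvalues then follow from \Cref{thm:svd_gradient_bound}. The singular values of $J_{\mathrm{SVD}}$ are $2/(s_i+s_j)$ together with six zeros, so the eigenvalues of $J^\top J$ are $4/(s_i+s_j)^2$ plus a sixfold zero, with the antisymmetric subspaces $\alpha_{ij}$ as eigenvectors. This gives \eqref{eq:svd_hessian_eigenvalues}, up to the factor-of-two normalization that I must reconcile.

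Next I will linearize gradient descent near the fixed set, $\delta M_{t+1} = (\Id - \eta H)\delta M_t$. Restricted to each eigendirection $\alpha_{ij}$, this contracts by $|1-\eta\lambda_{ij}|$, which is \eqref{eq:svd_contraction}. The smallest nonzero eigenvalue is $4/(s_1+s_2)^2$, and it controls the slowest rate \eqref{eq:svd_worst_rate}. Requiring $|1-\eta\lambda|<1$ for the largest eigenvalue $4/(s_2+s_3)^2$ gives the step-size condition $\eta<(s_2+s_3)^2/2$.

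I expect the main obstacle to be bookkeeping the constant factor of $2$. With the loss written as $\frob{\cdot}^2$, the Gauss--Newton Hessian is $2J^\top J$, with eigenvalues $8/(s_i+s_j)^2$. This is consistent with the stated step bound $\eta<2/\lambda_{\max}=(s_2+s_3)^2/4$ only if the contraction is written as $|1-\eta\lambda|$ with those eigenvalues. I will therefore fix a single convention, absorbing the factor of $2$ into $\eta$ exactly as in part (A), where the rate is $1-2\eta$, and state it explicitly, so that \eqref{eq:svd_hessian_eigenvalues}, \eqref{eq:svd_contraction} and the step-size range agree.

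A secondary subtlety is that the singular values $s_i$ of $M$ themselves change along the iteration, since they lie in the null-space directions that gradient descent never updates. I will note that the null-space components are frozen, because the gradient lies in $\mathrm{col}(J^\top)$. To first order the $s_i$ therefore stay constant, which justifies treating the linearization as valid locally.
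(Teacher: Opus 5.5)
\textbf{Gap: the factor of $2$ in part (B) cannot be absorbed into a convention.}

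Your route is the paper's route: the chain rule, the spectrum of $J_{\mathrm{SVD}}$ from \Cref{thm:svd_gradient_bound}, linearized gradient descent along the three antisymmetric eigendirections, and the step bound read off from the largest eigenvalue. Your Gauss--Newton observation, that $\sum_k 2(R-R^*)_k\nabla^2_M R_k$ vanishes at $R=R^*$, is a cleaner justification of the ``effective Hessian'' than the paper gives.

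The step that fails is the one you postpone as bookkeeping. You correctly find that $H=2J_{\mathrm{SVD}}^\top J_{\mathrm{SVD}}$ has nonzero eigenvalues $8/(s_i+s_j)^2$. But no choice of convention makes this agree with the statement. Part (A) keeps the factor $2$ explicit: $|1-2\eta|=|1-\eta\lambda|$ with $\lambda=2$ the eigenvalue of $\nabla^2\mathcal{L}_{\mathrm{dir}}=2\Id_9$. Treating (B) ``exactly as in (A)'' then gives, from $\nabla_M\mathcal{L}_{\mathrm{SVD}}=2J_{\mathrm{SVD}}^\top(R-R^*)$, the linearization $\delta M_{t+1}=(\Id_9-2\eta J_{\mathrm{SVD}}^\top J_{\mathrm{SVD}})\delta M_t$. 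Hence $\rho_{ij}=|1-8\eta/(s_i+s_j)^2|$, and contraction in all three active directions requires $\eta<(s_2+s_3)^2/4$.

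That bound is not ``the stated step bound'', as you write; the statement says $(s_2+s_3)^2/2$. So the consistent computation contradicts \eqref{eq:svd_hessian_eigenvalues}, \eqref{eq:svd_contraction} and the stated step range rather than confirming them. If you rescale $\eta$ in (B) only, the formulas match, but (A) and (B) no longer run with the same step size, so you would be proving a different proposition.

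\textbf{Why the factor matters.} The paper's own proof makes exactly this slip: it computes the eigenvalues of $J_{\mathrm{SVD}}^\top J_{\mathrm{SVD}}$ and attributes them to $H$. The discrepancy is not cosmetic. Near $\SO(3)$ ($s_i\approx 1$), $J_{\mathrm{SVD}}^\top J_{\mathrm{SVD}}$ is the orthogonal projector onto the three tangent directions. So $H$ has eigenvalue $2$ there, and the tangential contraction is $|1-2\eta|$, identical to direct regression. The ``$2\times$ slower'' conclusion of \Cref{cor:rate_ratio} is produced precisely by the mismatched step sizes your convention would build in.

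A correct write-up should do one of two things. Either prove (B) with $\lambda_{ij}=8/(s_i+s_j)^2$, $\rho_{ij}=|1-8\eta/(s_i+s_j)^2|$ and $0<\eta<(s_2+s_3)^2/4$; the six-fold zero eigenvalue and the identification of the pair $(1,2)$ as slowest (for $\eta$ small enough that no direction overshoots) survive unchanged. Or state explicitly that the proposition as written is off by a factor of two.
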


\begin{proof}
\textbf{Part (A).}
The error $E_t = M_t - R^*$ satisfies $E_{t+1} = (1 - 2\eta)E_t$ from \eqref{eq:gd_direct}, giving $E_t = (1 - 2\eta)^t E_0$.  Therefore $\mathcal{L}_{\mathrm{dir}}(M_t) = \frob{E_t}^2 = (1 - 2\eta)^{2t} \frob{E_0}^2$.  For $0 < \eta < 1$, we have $|1 - 2\eta| < 1$, ensuring linear convergence. The Hessian is $\nabla^2 \mathcal{L}_{\mathrm{dir}} = 2\Id_9$, confirming that the convergence rate is state-independent.

\textbf{Part (B).}
From \Cref{thm:svd_gradient_bound}, $J_{\mathrm{SVD}}$ has singular values $\{2/(s_i + s_j)\}_{i<j}$ together with six zeros.  Therefore $J_{\mathrm{SVD}}^\top J_{\mathrm{SVD}}$ has eigenvalues $\{4/(s_i+s_j)^2\}_{i<j}$ on the column space, establishing \eqref{eq:svd_hessian_eigenvalues}.

The gradient descent update $M_{t+1} = M_t - \eta \nabla_M \mathcal{L}_{\mathrm{SVD}}$ can be analyzed by projecting onto the eigendirections of $J_{\mathrm{SVD}}^\top J_{\mathrm{SVD}}$.  For a component aligned with the eigenvector corresponding to the pair $(i,j)$, the linearized contraction factor per step is $|1 - \eta \lambda_{ij}|$, yielding \eqref{eq:svd_contraction}.

The fastest direction corresponds to the pair $(2,3)$ with eigenvalue $\lambda_{23} = 4/(s_2+s_3)^2$, and the slowest to $(1,2)$ with eigenvalue $\lambda_{12} = 4/(s_1+s_2)^2$.  To avoid divergence in the fastest direction we need $\eta \lambda_{23} < 2$, i.e., $\eta < (s_2+s_3)^2/2$.  Subject to this constraint, the slowest direction contracts at rate \eqref{eq:svd_worst_rate}.
\end{proof}

\begin{corollary}[Convergence rate ratio]\label{cor:rate_ratio}
At a point $M$ with singular values $s_1 \geq s_2 \geq s_3 > 0$, define the Jacobian condition number $\kappa = (s_1 + s_2)/(s_2 + s_3)$ as in \eqref{eq:svd_jacobian_bound}.

\begin{enumerate}
    \item \textbf{Step-size--matched comparison.} Using the optimal step size for each method (i.e., $\eta_{\mathrm{dir}} = 1/2$ and $\eta_{\mathrm{SVD}} = (s_2+s_3)^2/4$ to equalize the fastest contraction rate), the convergence rate ratio in the slowest direction is:
    \begin{equation}\label{eq:rate_ratio}
        \frac{\rho_{\mathrm{SVD}}^{\mathrm{worst}}}{\rho_{\mathrm{dir}}^{\mathrm{worst}}} = \frac{1 - 1/\kappa^2}{0} = \infty \quad \text{(direct achieves one-step convergence)}.
    \end{equation}
    More meaningfully, with equal step size $\eta$ (small enough for both methods to converge), the number of iterations for SVD-Train to reduce the loss by a factor $\epsilon$ in the slowest direction scales as
    \begin{equation}\label{eq:iteration_ratio}
        \frac{N_{\mathrm{SVD}}}{N_{\mathrm{dir}}} \geq \frac{\log(1 - 2\eta)}{\log\!\left(1 - \frac{4\eta}{(s_1+s_2)^2}\right)} \;\approx\; \frac{(s_1+s_2)^2}{2},
    \end{equation}
    where the approximation holds for small $\eta$.  SVD-Train requires $\sim (s_1+s_2)^2/2$ times more iterations in its slowest direction.

    \item \textbf{Near $\SO(3)$} ($s_1, s_2, s_3 \approx 1$): The eigenvalues of $J_{\mathrm{SVD}}^\top J_{\mathrm{SVD}}$ are all $4/(1+1)^2 = 1$, so the effective Hessian is the identity restricted to the column space.  With step size $\eta$:
    \begin{equation}
        \rho_{\mathrm{SVD}} \approx |1 - \eta|, \qquad \rho_{\mathrm{dir}} = |1 - 2\eta|.
    \end{equation}
    For small $\eta$, $\rho_{\mathrm{SVD}} \approx 1 - \eta$ while $\rho_{\mathrm{dir}} \approx 1 - 2\eta$, so \textbf{SVD-Train converges approximately $2\times$ slower than direct regression even in the best case}, when $M$ is already near $\SO(3)$.

    \item \textbf{Far from $\SO(3)$} ($s_3 \ll 1$, $s_1 \gg 1$): The condition number $\kappa = (s_1 + s_2)/(s_2 + s_3)$ grows, and the slowest eigenvalue $\lambda_{12} = 4/(s_1 + s_2)^2 \ll 1$.  The iteration ratio degrades as:
    \begin{equation}\label{eq:far_ratio}
        \frac{N_{\mathrm{SVD}}}{N_{\mathrm{dir}}} \;\gtrsim\; \frac{(s_1 + s_2)^2}{2} \;\gg\; 1.
    \end{equation}
    Simultaneously, the step size must satisfy $\eta < (s_2 + s_3)^2/2$ to prevent divergence in the fast direction, further constraining the rate. For example, with $s_1 = 3$, $s_2 = 1$, $s_3 = 0.1$, the slowest SVD eigenvalue is $4/(3+1)^2 = 1/4$ while the maximum step size is $(1+0.1)^2/2 \approx 0.6$. Even at the optimal $\eta \approx 0.3$, the slowest contraction rate is $\rho_{12} \approx 1 - 0.3/4 = 0.925$, requiring roughly $\log(0.01)/\log(0.925) \approx 59$ iterations to reduce the slowest component by $100\times$.  Direct regression with $\eta = 0.49$ achieves $\rho = 0.02$ and reaches the same reduction in $\sim 2$ iterations.
\end{enumerate}
\end{corollary}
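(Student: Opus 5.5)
The plan is to read off all three parts directly from the linearized contraction factors of \Cref{prop:convergence_rate}, which in turn rest on the spectrum of \Cref{thm:svd_gradient_bound}. Throughout I treat $J_{\mathrm{SVD}}$ as frozen at the current $M$, so gradient descent acts on each eigendirection of $J_{\mathrm{SVD}}^\top J_{\mathrm{SVD}}$ independently. On the component tied to the pair $(i,j)$ it multiplies by $\rho_{ij}=|1-4\eta/(s_i+s_j)^2|$ per step. Direct regression instead multiplies every component by $|1-2\eta|$. Reaching a reduction factor $\epsilon$ in a component with rate $\rho$ takes $N=\log\epsilon/\log\rho$ steps. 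The squaring that converts component size to loss appears identically for both methods and cancels in any ratio.

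For Part 1, I first substitute the matched step sizes. With $\eta_{\mathrm{dir}}=1/2$ we get $\rho_{\mathrm{dir}}=0$. With $\eta_{\mathrm{SVD}}=(s_2+s_3)^2/4$, the fastest factor is $\rho_{23}=0$ and the slowest is
\[
\rho_{12}=1-\frac{(s_2+s_3)^2}{(s_1+s_2)^2}=1-\frac{1}{\kappa^2}.
\]
This gives the degenerate ratio in \eqref{eq:rate_ratio}. For the equal-step comparison I form $N_{\mathrm{SVD}}/N_{\mathrm{dir}}$ using the slowest SVD direction $\lambda_{12}=4/(s_1+s_2)^2$. This direction only lower-bounds the SVD iteration count, since any other active direction converges no slower and the null directions never contract at all; that is the source of the ``$\geq$''. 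The small-$\eta$ approximation then follows from $\log(1-x)\approx -x$ applied in both numerator and denominator:
\[
\frac{2\eta}{4\eta/(s_1+s_2)^2}=\frac{(s_1+s_2)^2}{2}.
\]

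For Part 2, I set $s_i=1$. All three nonzero eigenvalues become $4/4=1$, so $\rho_{\mathrm{SVD}}=|1-\eta|$ against $\rho_{\mathrm{dir}}=|1-2\eta|$. The same logarithm expansion gives a factor of $2$ in iteration count, which is the $(s_1+s_2)^2/2$ formula evaluated at $s_1+s_2=2$.

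For Part 3, the general statement \eqref{eq:far_ratio} is just \eqref{eq:iteration_ratio} with $s_1+s_2\gg 1$. The stability cap $\eta<(s_2+s_3)^2/2$ is taken from \Cref{prop:convergence_rate}. For the numerical example I plug in $s=(3,1,0.1)$:
\begin{itemize}
\item $\lambda_{12}=1/4$ and the step-size cap is $\approx 0.605$;
\item at $\eta=0.3$, $\rho_{12}=1-0.075=0.925$ and $\log(0.01)/\log(0.925)\approx 59$;
\item for direct regression, $\eta=0.49$ gives $\rho=0.02$, so a $100\times$ reduction needs about $2$ steps.
\end{itemize}

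The main obstacle is justifying the frozen-Jacobian linearization. As $M$ moves, its singular values, and hence the $\lambda_{ij}$, change. Moreover, $H=2J^\top J$ is only the Gauss--Newton part of the Hessian, and the curvature term from $\partial J/\partial M$ is dropped. I would handle this by stating every claim as local: near $R=R^*$ the residual $R-R^*$ vanishes, so the curvature term is higher order. Outside that regime I would present the iteration counts as instantaneous rates at the given $M$, not as global guarantees. A second point is the six null directions: SVD-Train leaves them uncontracted, while direct regression contracts them. I would only invoke this to support the inequality direction, not to claim a finite ratio.
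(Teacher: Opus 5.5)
\paragraph{Main gap.} You follow the same route as the paper's proof. That is, you take per-direction contraction factors from \Cref{prop:convergence_rate}, use $N=\log\epsilon/\log\rho$, and linearize with $\log(1-x)\approx -x$. But the factor you read off, $\rho_{ij}=\abs{1-4\eta/(s_i+s_j)^2}$, contradicts the setup you state yourself, and Part~2 does not survive the correction.

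With $\mathcal{L}_{\mathrm{SVD}}=\frob{\mathrm{SVDO}^+(M)-R^*}^2$ the gradient is $2J_{\mathrm{SVD}}^\top(R-R^*)$. The linearized iteration is therefore $\Delta_{t+1}=(\Id_9-2\eta\,J_{\mathrm{SVD}}^\top J_{\mathrm{SVD}})\Delta_t$. By \Cref{thm:svd_gradient_bound}, $J_{\mathrm{SVD}}^\top J_{\mathrm{SVD}}$ has eigenvalues $4/(s_i+s_j)^2$, so $H=2J_{\mathrm{SVD}}^\top J_{\mathrm{SVD}}$ has eigenvalues $8/(s_i+s_j)^2$. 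The correct factors are thus $\abs{1-8\eta/(s_i+s_j)^2}$. On the direct side you correctly keep the $2$ from $\nabla=2(M-R^*)$, so the comparison is off by exactly a factor of $2$. This slip is already present in \Cref{prop:convergence_rate}, so matching the paper does not rescue it.

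You can confirm this independently. Write $M=R^*(\Id+E)$. Then \Cref{prop:first_order} gives $\mathcal{L}_{\mathrm{SVD}}=\frob{\tfrac12(E-E^\top)}^2+O(\norm{E}^3)$, while $\mathcal{L}_{\mathrm{dir}}=\frob{E}^2$. On the three tangent directions both losses have curvature $2$, so both contract at $\abs{1-2\eta}$. Near $\SO(3)$ there is no $2\times$ slowdown in the directions $\mathcal{L}_{\mathrm{SVD}}$ actually sees, and the claim of Part~2 cannot be proved.

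\paragraph{Consequences for the other parts.} The remaining constants shift accordingly:
\begin{itemize}
\item the equal-step iteration ratio becomes $\approx(s_1+s_2)^2/4$, which equals $1$ at $s_i=1$;
\item the stability cap becomes $\eta<(s_2+s_3)^2/4$;
\item the step that zeroes the fastest direction is $(s_2+s_3)^2/8$.
\end{itemize}
At the stated $\eta_{\mathrm{SVD}}=(s_2+s_3)^2/4$, the $(2,3)$ component sits exactly on the stability boundary $\abs{1-\eta\lambda_{23}}=1$.

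Some conclusions do survive. At the zeroing step, $\eta\lambda_{12}=(s_2+s_3)^2/(s_1+s_2)^2=1/\kappa^2$, so $\rho_{12}=1-1/\kappa^2$ still holds, and with it the $\infty$ ratio of Part~1. The qualitative degradation in Part~3 also survives.

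In the Part~3 example, the corrected values are $\lambda_{12}=1/2$ and a cap of $\approx0.30$. The zeroing step $\eta\approx0.15$ still gives $\rho_{12}\approx0.924$ and about $59$ iterations. At your $\eta=0.3$, however, the $(2,3)$ component has $\rho_{23}\approx0.98$ and becomes the bottleneck.

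\paragraph{Minor point.} Your justification of ``$\geq$'' via the null directions does not work. Those directions do not enter $\mathcal{L}_{\mathrm{SVD}}$ to first order, so they cannot affect its loss-reduction count. For the slowest component itself, the relation is an equality.
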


\begin{proof}
\textbf{Part 1.}  For the same step size $\eta$, the number of iterations to reduce a component by factor $\epsilon$ is $N = \log \epsilon / \log \rho$.  For the direct method, $\rho_{\mathrm{dir}} = |1 - 2\eta|$; for SVD-Train in the slowest direction, $\rho_{12} = |1 - 4\eta/(s_1+s_2)^2|$.  Taking the ratio and applying $\log(1-x) \approx -x$ for small $x$ gives \eqref{eq:iteration_ratio}.

\textbf{Part 2.}  When $s_i \approx 1$ for all $i$, we have $(s_i + s_j) \approx 2$ for all pairs, so $\lambda_{ij} \approx 4/4 = 1$.  The SVD contraction rate becomes $|1 - \eta|$.  For the direct method, $\rho = |1 - 2\eta|$.  The ratio $\log(1-2\eta)/\log(1-\eta) \approx 2$ for small $\eta$, confirming the factor-of-2 slowdown.

\textbf{Part 3.}  The step size constraint $\eta < (s_2+s_3)^2/2$ and the slowest eigenvalue $4/(s_1+s_2)^2$ together determine the achievable convergence rate. The numerical example follows by direct substitution.
\end{proof}

\begin{remark}[Interpretation]
The $2\times$ slowdown near $\SO(3)$ has a clean geometric interpretation: the SVD Jacobian is a rank-3 projector onto the tangent space of $\SO(3)$, which ``sees'' only the 3 antisymmetric degrees of freedom of the 9-dimensional perturbation.  The direct loss, by contrast, sees all 9 degrees of freedom equally, giving it twice the effective curvature per step in the rotation-relevant directions.  Far from $\SO(3)$, the anisotropic scaling $1/(s_i + s_j)$ creates a condition number $\kappa^2$ separation between the fastest and slowest convergence rates, forcing the step size to be small (to control the fast direction) while the slow direction barely moves---the classic ill-conditioning bottleneck.

This analysis also explains why adaptive optimizers (Adam, AdaGrad) partially mitigate SVD-Train's disadvantage: per-parameter learning rates can compensate for the anisotropic eigenvalue spectrum. However, they cannot recover the factor-of-2 loss that persists even when $\kappa = 1$, nor can they address the rank deficiency (6-dimensional null space) of the SVD Jacobian.
\end{remark}

\section{Geodesic Loss and Compounded Singularities}
\label{app:geodesic}

Our main analysis (\Cref{sec:svd_analysis,sec:synthesis}) focuses on the Frobenius loss $\mathcal{L}_{\mathrm{Frob}} = \frob{R - R^*}^2$.
A natural question is whether the \emph{geodesic loss}---the Riemannian distance on $\SO(3)$---might interact differently with the SVD gradient pathology.
We show that the geodesic loss introduces an \emph{additional} singularity that compounds with the SVD Jacobian, making the case for direct regression even stronger.

\subsection{The Geodesic Loss on $\SO(3)$}

\begin{definition}[Geodesic loss]
For $R, R^* \in \SO(3)$, the geodesic distance is the rotation angle between them:
\begin{equation}\label{eq:geodesic_loss}
    \mathcal{L}_{\mathrm{geo}}(R, R^*) = \arccos\!\left(\frac{\tr(R^\top R^*) - 1}{2}\right).
\end{equation}
This equals the magnitude of the rotation vector of $R^\top R^*$, i.e., $\mathcal{L}_{\mathrm{geo}} = \theta$ where $\theta \in [0, \pi]$ is the angle of the relative rotation.
\end{definition}

\subsection{Intrinsic Singularity of the Geodesic Gradient}

\begin{proposition}[Geodesic gradient singularity]\label{prop:geodesic_gradient}
Let $\theta = \mathcal{L}_{\mathrm{geo}}(R, R^*)$ with $\theta \in (0, \pi)$.  The gradient of the geodesic loss with respect to $R$ is:
\begin{equation}\label{eq:geodesic_grad_R}
    \pderiv{\mathcal{L}_{\mathrm{geo}}}{R_{ij}} = -\frac{R^*_{ij}}{2\sin\theta}.
\end{equation}
As $R \to R^*$ (i.e., $\theta \to 0$), $\sin\theta \to 0$ and the gradient diverges: $\norm{\pderiv{\mathcal{L}_{\mathrm{geo}}}{R}}_F = O(1/\theta)$.
\end{proposition}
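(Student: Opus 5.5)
The plan is a direct chain-rule computation, treating $\mathcal{L}_{\mathrm{geo}}$ as a function of the nine entries of $R$ through the scalar $c := (\tr(R^\top R^*) - 1)/2$, so that $\mathcal{L}_{\mathrm{geo}} = \arccos(c)$. Following the convention of the main text, I will use the ambient (unconstrained) Euclidean gradient in $\R^{3\times 3}$, which is what the backward pass sees before composing with $J_{\mathrm{SVD}}$.

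\textbf{Step 1: gradient of the trace term.} Since $\tr(R^\top R^*) = \sum_{k,l} R_{kl} R^*_{kl}$, we get $\pderiv{c}{R_{ij}} = R^*_{ij}/2$.

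\textbf{Step 2: outer derivative.} We have $\frac{\mathrm{d}}{\mathrm{d}c}\arccos(c) = -1/\sqrt{1-c^2}$. On $\SO(3)$ the trace is $1 + 2\cos\theta$, so $c = \cos\theta$. For $\theta \in (0,\pi)$ this gives $\sqrt{1-c^2} = \sin\theta > 0$. Multiplying by Step 1 yields
\[
\pderiv{\mathcal{L}_{\mathrm{geo}}}{R_{ij}} = -\frac{R^*_{ij}}{2\sin\theta}.
\]
The open interval is exactly what keeps $\arccos$ differentiable at $c$, since $c \neq \pm 1$.

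\textbf{Step 3: norm and divergence.} Because $R^* \in \SO(3)$, $\frob{R^*}^2 = \tr(R^{*\top}R^*) = 3$. Hence the gradient has Frobenius norm exactly $\sqrt{3}/(2\sin\theta)$. As $\theta \to 0$ we have $\sin\theta = \theta + O(\theta^3)$, so the norm is $\frac{\sqrt3}{2\theta}(1+O(\theta^2)) = O(1/\theta)$, and it diverges.

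\textbf{Main obstacle: interpreting the claim.} The computation itself is routine; the delicate point is what the statement means. The Riemannian gradient is the projection of this ambient gradient onto the tangent space $R\,\mathfrak{so}(3)$. That projection has bounded norm, since the geodesic distance is $1$-Lipschitz away from the cut locus. So the $1/\sin\theta$ blow-up genuinely concerns the Euclidean gradient in $\R^{3\times3}$, which is precisely what gets fed into $J_{\mathrm{SVD}}^\top$.

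I would state this distinction explicitly in a remark. I would also note that the same blow-up appears at $\theta \to \pi$, and that composing with \Cref{thm:svd_gradient_bound} multiplies it by up to $2/(s_2+s_3)$, which motivates the word ``compounded''.
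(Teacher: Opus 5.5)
Your Steps 1--3 follow the paper's proof essentially verbatim, so the argument is correct: write $\mathcal{L}_{\mathrm{geo}} = \arccos(c)$, use $\partial c/\partial R_{ij} = R^*_{ij}/2$ and $\sqrt{1-c^2} = \sin\theta$, then read off the norm $\sqrt{3}/(2\sin\theta)$. One caution concerns only your closing remark: the range of $J_{\mathrm{SVD}}$ lies in $T_R\SO(3)$, so $J_{\mathrm{SVD}}^\top$ annihilates the normal part of the ambient gradient and acts only on its tangential projection, which you yourself note is bounded (it equals $-R[\mathbf{a}]_\times/2$, with norm $1/\sqrt{2}$, where $\mathbf{a}$ is the axis of $R^\top R^*$); hence the product bound with $2/(s_2+s_3)$ is valid but loose, and the $1/\sin\theta$ factor does not actually propagate to $\pderiv{\mathcal{L}_{\mathrm{geo}}}{M}$.
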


\begin{proof}
Let $c = (\tr(R^\top R^*) - 1)/2$, so that $\theta = \arccos(c)$.  By the chain rule,
\begin{equation}
    \pderiv{\mathcal{L}_{\mathrm{geo}}}{R_{ij}} = \frac{\mathrm{d}\arccos(c)}{\mathrm{d}c} \cdot \pderiv{c}{R_{ij}} = \frac{-1}{\sqrt{1 - c^2}} \cdot \frac{R^*_{ij}}{2},
\end{equation}
where $\pderiv{c}{R_{ij}} = R^*_{ij}/2$ follows from $c = \frac{1}{2}\sum_{k} R_{ki} R^*_{ki} - \frac{1}{2}$.
Since $c = \cos\theta$, we have $\sqrt{1 - c^2} = \sin\theta$, giving \eqref{eq:geodesic_grad_R}.

For the norm, $\norm{\pderiv{\mathcal{L}_{\mathrm{geo}}}{R}}_F = \frob{R^*}/(2\sin\theta) = \sqrt{3}/(2\sin\theta)$, since $\frob{R^*} = \sqrt{3}$ for any rotation matrix.  As $\theta \to 0$, $\sin\theta \sim \theta$, so the gradient norm grows as $\sqrt{3}/(2\theta)$.
\end{proof}

\begin{remark}[Nature of the singularity]\label{rem:geodesic_singularity}
The $1/\sin\theta$ divergence in \Cref{prop:geodesic_gradient} is an \emph{intrinsic} property of the geodesic loss, independent of any rotation representation.  It arises because $\arccos$ has infinite derivative at $c = 1$ (i.e., $\theta = 0$).  Geometrically, while $\mathcal{L}_{\mathrm{geo}}$ measures the true rotation angle, its gradient in the ambient $\R^{3 \times 3}$ space requires dividing by $\sin\theta$---the radius of the latitude circle on $\SO(3)$ at angle $\theta$ from the identity.  This singularity is absent from the Frobenius loss, whose gradient $\pderiv{\mathcal{L}_{\mathrm{Frob}}}{R} = 2(R - R^*)$ vanishes smoothly as $R \to R^*$.
\end{remark}

\subsection{Compounded Singularities Under SVD-Train}

When SVD orthogonalization is used during training, the predicted rotation is $R = \mathrm{SVDO}^+(M)$ and the training loss is $\mathcal{L}_{\mathrm{geo}}(R, R^*)$.  The chain rule gives:
\begin{equation}\label{eq:geodesic_chain}
    \pderiv{\mathcal{L}_{\mathrm{geo}}}{M} = \pderiv{\mathcal{L}_{\mathrm{geo}}}{R} \cdot J_{\mathrm{SVD}}.
\end{equation}

\begin{proposition}[Compounded singularities]\label{prop:compounded}
For SVD-Train with geodesic loss $\mathcal{L}_{\mathrm{geo}}(\mathrm{SVDO}^+(M), R^*)$, the gradient $\pderiv{\mathcal{L}_{\mathrm{geo}}}{M}$ suffers from two independent sources of divergence:
\begin{enumerate}
    \item \textbf{Geodesic singularity:} from $\pderiv{\mathcal{L}_{\mathrm{geo}}}{R}$, which scales as $O(1/\sin\theta)$ (\Cref{prop:geodesic_gradient}).
    \item \textbf{SVD singularity:} from $J_{\mathrm{SVD}}$, whose spectral norm is $2/(s_2 + s_3)$ (\Cref{thm:svd_gradient_bound}).
\end{enumerate}
These singularities are generically independent: the geodesic singularity occurs when $R \to R^*$ (small rotation error), while the SVD singularity occurs when $s_3 \to 0$ (the predicted matrix is far from $\SO(3)$).  In the worst case, both can be active simultaneously, giving:
\begin{equation}\label{eq:compounded_bound}
    \norm{\pderiv{\mathcal{L}_{\mathrm{geo}}}{M}}_F \leq \norm{\pderiv{\mathcal{L}_{\mathrm{geo}}}{R}}_F \cdot \norm{J_{\mathrm{SVD}}}_2 = \frac{\sqrt{3}}{(s_2 + s_3)\sin\theta}.
\end{equation}
In particular, early in training when the singular value gap is small ($s_3 \approx 0$), the SVD term contributes $O(1/s_3)$; late in training when convergence approaches $\theta \to 0$, the geodesic term contributes $O(1/\theta)$.  These two regimes need not be disjoint: a mini-batch that simultaneously has small $\theta$ (a near-correct prediction) and small $s_3$ (a poorly conditioned matrix) experiences both singularities.
\end{proposition}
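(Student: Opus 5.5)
The proof is a chain-rule computation: factor the Euclidean gradient with respect to $M$ through $R = \mathrm{SVDO}^+(M)$ and bound each factor separately. By \eqref{eq:geodesic_chain},
\[
\mathrm{vec}\Bigl(\pderiv{\mathcal{L}_{\mathrm{geo}}}{M}\Bigr) = J_{\mathrm{SVD}}^\top\, \mathrm{vec}\Bigl(\pderiv{\mathcal{L}_{\mathrm{geo}}}{R}\Bigr).
\]
Since $\norm{J^\top}_2 = \norm{J}_2$, the submultiplicative inequality gives
\[
\norm{\pderiv{\mathcal{L}_{\mathrm{geo}}}{M}}_F \leq \norm{J_{\mathrm{SVD}}}_2 \cdot \norm{\pderiv{\mathcal{L}_{\mathrm{geo}}}{R}}_F .
\]
This is the first inequality in \eqref{eq:compounded_bound}. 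For the first factor, I would invoke \Cref{thm:svd_gradient_bound} when $\det(M)>0$ and \Cref{thm:svd_neg_det_full} when $\det(M)<0$; in both cases $\norm{J_{\mathrm{SVD}}}_2 = 2/(s_2+s_3)$. For the second factor, \Cref{prop:geodesic_gradient} gives $\sqrt{3}/(2\sin\theta)$. The product is $\sqrt{3}/((s_2+s_3)\sin\theta)$, as claimed. Both results apply under the standing assumptions: distinct positive singular values and $\theta\in(0,\pi)$.

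I would then justify the structural claims. Item 1 is \Cref{prop:geodesic_gradient}, and item 2 is \Cref{thm:svd_gradient_bound}. For \emph{independence}, I would argue that the two factors depend on different data. The factor $1/\sin\theta$ depends only on $R = UV^\top$ (or $U\Sigma'V^\top$) relative to $R^*$. The factor $2/(s_2+s_3)$ depends only on $\Sigma$. The map $M = U\Sigma V^\top \mapsto (R,\Sigma)$ lets us choose these freely: fix any $R$ with small angle to $R^*$, and choose $\Sigma = \diag(s_1,s_2,s_3)$ with $s_3\to0$. This exhibits matrices $M$ on which both singularities are active at once, which gives the ``need not be disjoint'' statement.

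The main obstacle is that \eqref{eq:compounded_bound} is only an upper bound, while the phrase ``both can be active'' suggests the bound is actually approached. To show this, I would try to align the two factors: the gradient $\pderiv{\mathcal{L}_{\mathrm{geo}}}{R}$ should have a nontrivial component along the top right singular direction of $J_{\mathrm{SVD}}^\top$, which is the $(2,3)$ antisymmetric mode $U(e_2e_3^\top - e_3e_2^\top)V^\top$. The full ambient gradient $-R^*/(2\sin\theta)$ is mostly normal to $\SO(3)$, so most of it lies in the null space of $J_{\mathrm{SVD}}^\top$. Only its tangent part survives, and that part is proportional to the skew part of $R^\top R^*$, which has size $\sim\sin\theta$. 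This may cancel the divergence, and I would check it by computing $J_{\mathrm{SVD}}^\top$ applied to $\mathrm{vec}(R^*)$ explicitly in the SVD frame. Because of this, I would state the proposition carefully as a worst-case operator-norm bound. The literal equality, and the compounding, concern the norm of the product of the two factors, not necessarily the actual gradient.
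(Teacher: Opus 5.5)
Your derivation of \eqref{eq:compounded_bound} and of the independence claim is the paper's own argument. It uses the chain rule and submultiplicativity together with \Cref{prop:geodesic_gradient} and \Cref{thm:svd_gradient_bound}, and it establishes independence by choosing $U,V$ and $\Sigma$ separately. Citing \Cref{thm:svd_neg_det_full} for $\det(M)<0$ is a good addition. Your closing worry is justified, and you should settle it rather than leave it at ``may cancel.'' The cancellation is exact. As a result, part of the statement cannot be proved by this argument or any other.

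Because $\mathrm{SVDO}^+(M)\in\SO(3)$ for every $M$, the range of $J_{\mathrm{SVD}}$ (for either sign of $\det M$) is the tangent space $\{R\Omega:\Omega^\top=-\Omega\}$. So $J_{\mathrm{SVD}}^\top$ annihilates the normal part of $G=\pderiv{\mathcal{L}_{\mathrm{geo}}}{R}=-R^*/(2\sin\theta)$. The tangential part is $R\,\mathrm{skew}(R^\top G)$. By Rodrigues' formula, $Q=R^\top R^*=\Id+\sin\theta\,[\omega]_\times+(1-\cos\theta)[\omega]_\times^2$, so $\mathrm{skew}(Q)=\sin\theta\,[\omega]_\times$. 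The tangential part therefore equals $-\tfrac12 R[\omega]_\times$, whose Frobenius norm is $1/\sqrt2$ for every $\theta\in(0,\pi)$. Since $J_{\mathrm{SVD}}^\top$ acts on the tangent space with singular values $2/(s_i+s_j)$,
\[
\frac{\sqrt2}{s_1+s_2}\;\le\;\norm{\pderiv{\mathcal{L}_{\mathrm{geo}}}{M}}_F\;\le\;\frac{\sqrt2}{s_2+s_3},
\]
and the upper value is attained for a suitable axis $\omega$.

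Hence \eqref{eq:compounded_bound} is true but never tight: it exceeds the true maximum by the factor $\sqrt6/(2\sin\theta)>1$. The following assertions in the statement are false:
\begin{itemize}
\item that the geodesic term ``contributes $O(1/\theta)$'' to $\pderiv{\mathcal{L}_{\mathrm{geo}}}{M}$;
\item that a near-correct, ill-conditioned sample ``experiences both singularities.''
\end{itemize}
The $1/\sin\theta$ blow-up lives entirely in the normal directions. All that survives as $\theta\to0$ is a discontinuity in gradient direction, as for $\abs{x}$ at $0$. The paper's proof, like your second paragraph, only shows that the two \emph{factor} norms can be large at the same time, which says nothing about their composition. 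So drop your sentence that the constructed $M$ has ``both singularities active at once.'' Your proposed restatement as a bound on the product of factor norms is the correct version, not merely a cautious one. The only genuine divergence of this gradient is the SVD one.
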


\begin{proof}
The bound \eqref{eq:compounded_bound} is a direct consequence of the submultiplicativity of operator norms applied to the chain rule \eqref{eq:geodesic_chain}, combined with \Cref{prop:geodesic_gradient} and \Cref{thm:svd_gradient_bound}.  The independence claim follows from the observation that $\theta = \mathcal{L}_{\mathrm{geo}}(UV^\top, R^*)$ depends on the \emph{singular vectors} of $M$, while $s_3$ is a \emph{singular value} of $M$.  One can construct matrices $M$ with any prescribed combination of $\theta$ and $s_3$ by choosing $U, V$ to set the rotation angle and $\Sigma$ to set the singular value gap independently.
\end{proof}

\subsection{Direct Regression Avoids Both Singularities}

For direct 9D regression, the training loss is $\mathcal{L}_{\mathrm{direct}} = \frob{M - R^*}^2$, with gradient:
\begin{equation}\label{eq:direct_grad}
    \pderiv{\mathcal{L}_{\mathrm{direct}}}{M} = 2(M - R^*).
\end{equation}
This gradient has no singularity of any kind: it vanishes linearly as $M \to R^*$, has no dependence on singular values or angular quantities, and requires no division by $\sin\theta$ or by singular value gaps.

At inference, one applies $\hat{R} = \mathrm{SVDO}^+(M)$ to obtain a proper rotation matrix.  If a geodesic-based evaluation metric is desired, it is computed on the projected output $\hat{R}$---but crucially, no gradient flows through this projection.

\begin{remark}[Avoiding the geodesic singularity during training]\label{rem:avoiding_geodesic}
The key insight is that the geodesic loss singularity is a property of the \emph{loss function}, not the \emph{evaluation metric}.  By training with $\mathcal{L}_{\mathrm{direct}} = \frob{M - R^*}^2$ and evaluating with $\mathcal{L}_{\mathrm{geo}}(\mathrm{SVDO}^+(M), R^*)$, one obtains the geometrically meaningful angular error at test time without ever exposing the training gradient to the $1/\sin\theta$ singularity.  Since $\mathcal{L}_{\mathrm{Frob}}$ and $\mathcal{L}_{\mathrm{geo}}$ are monotonically related for small angles ($\frob{R - R^*}^2 = 2\tr(\Id - R^\top R^*) = 4(1 - \cos\theta) \approx 2\theta^2$ for $\theta \ll 1$), minimizing the Frobenius loss on the raw matrix drives the geodesic error to zero as well.
\end{remark}

\begin{remark}[Comparison of singularity sources]\label{rem:singularity_comparison}
The following table summarizes the singularity structure across training configurations:
\begin{center}
\begin{tabular}{lcc}
\toprule
\textbf{Training configuration} & \textbf{Geodesic singularity} & \textbf{SVD singularity} \\
\midrule
SVD-Train + geodesic loss & $O(1/\sin\theta)$ & $O(1/(s_2+s_3))$ \\
SVD-Train + Frobenius loss & None & $O(1/(s_2+s_3))$ \\
Direct + geodesic loss on $\mathrm{SVDO}^+(M)$ & $O(1/\sin\theta)$ & $O(1/(s_2+s_3))$ \\
Direct + Frobenius loss on $M$ & None & None \\
\bottomrule
\end{tabular}
\end{center}
Only the last configuration---direct 9D regression with Frobenius loss on the raw matrix---is entirely free of gradient singularities.  The third row shows that even with direct regression, if one insists on using geodesic loss on $\mathrm{SVDO}^+(M)$ during training, both singularities reappear through the chain rule.  The correct approach is to decouple the training loss (singularity-free) from the evaluation metric (which may use the geodesic distance, since no gradient is required).
\end{remark}

\section{Proof of \Cref{thm:gs_asymmetry}}
\label{app:gs_proof}

\textbf{Part 1.} The normalization $\mathbf{r}_1' = \mathbf{t}_1'/\norm{\mathbf{t}_1'}$ has Jacobian $\pderiv{\mathbf{r}_1'}{\mathbf{t}_1'} = \frac{1}{\norm{\mathbf{t}_1'}}(\Id_3 - \mathbf{r}_1' {\mathbf{r}_1'}^\top)$, the tangent-plane projector scaled by $1/\norm{\mathbf{t}_1'}$, with eigenvalues $\{1/\norm{\mathbf{t}_1'}, 1/\norm{\mathbf{t}_1'}, 0\}$.

\textbf{Part 2.} $\pderiv{\mathbf{r}_1'}{\mathbf{t}_2'} = 0$ since $\mathbf{r}_1'$ depends only on $\mathbf{t}_1'$.  Conversely, $\mathbf{r}_2'' = \mathbf{t}_2' - (\mathbf{r}_1' \cdot \mathbf{t}_2')\mathbf{r}_1'$ depends on $\mathbf{t}_1'$ through $\mathbf{r}_1'$, so $\pderiv{\mathbf{r}_2'}{\mathbf{t}_1'} \neq 0$ generically.

\textbf{Part 3.} By the chain rule, $\pderiv{\mathbf{r}_3'}{\mathbf{t}_k'} = [\mathbf{r}_2']_\times^\top \pderiv{\mathbf{r}_1'}{\mathbf{t}_k'} + [\mathbf{r}_1']_\times \pderiv{\mathbf{r}_2'}{\mathbf{t}_k'}$, where $[\cdot]_\times$ is the skew-symmetric cross-product matrix.

\textbf{Part 4.} The normalization $\mathbf{r}_2' = \mathbf{r}_2''/\norm{\mathbf{r}_2''}$ contributes a factor $1/\norm{\mathbf{r}_2''}$.  A perturbation $\mathrm{d}\mathbf{t}_2'$ orthogonal to $\mathbf{r}_1'$ achieves $\frob{\mathrm{d}R} = O(1/\norm{\mathbf{r}_2''})$, while $\mathrm{d}\mathbf{t}_1'$ tangent to the unit sphere achieves $\frob{\mathrm{d}R} = O(1/\norm{\mathbf{t}_1'})$, giving $\kappa \geq \norm{\mathbf{t}_1'}/\norm{\mathbf{r}_2''}$.

\end{document}